\def\CameraReadyClean{}
\documentclass{article}

\usepackage[main, final]{neurips_2026}

\usepackage[utf8]{inputenc}
\usepackage[T1]{fontenc}
\usepackage{hyperref}
\usepackage{url}
\usepackage{booktabs}
\usepackage{amsfonts}
\usepackage{amsmath}
\usepackage{amssymb}
\usepackage{nicefrac}
\usepackage{microtype}
\usepackage{xcolor}
\ifdefined\CameraReadyClean
  \colorlet{red}{black}
  \colorlet{orange}{black}
  \colorlet{blue}{black}
  \colorlet{gray}{black}
  \hypersetup{hidelinks}
\fi
\usepackage{graphicx}
\usepackage{subcaption}
\usepackage{wrapfig}
\usepackage{algorithm}
\usepackage{algorithmic}
\usepackage{amsthm}
\usepackage{natbib}

\newtheorem{theorem}{Theorem}
\newtheorem{proposition}[theorem]{Proposition}

\newcommand{\E}{\mathbb{E}}

\newcommand{\D}{\mathcal{D}}

\title{Feature Information Dynamics in Diffusion}

\author{
  \textcolor{red}{Jia-Shu Pan$^{1}$, Tao Zhang$^{1,2}$, Yufei Huang$^{1,2}$, Yanjun Sheng$^{3}$, and Tailin Wu$^{1}$} \\
  \textcolor{red}{$^{1}$Department of Artificial Intelligence, School of Engineering, Westlake University} \\
  \textcolor{red}{$^{2}$Zhejiang University} \\
  \textcolor{red}{$^{3}$Research School of Astronomy and Astrophysics, Australian National University} \\
  \textcolor{red}{\texttt{panjiashu@westlake.edu.cn, wutailin@westlake.edu.cn}}
}

\begin{document}

\maketitle

\begin{abstract}
Diffusion models generate data through a continuum of denoising problems, and are widely observed to reveal coarse structure before fine detail.
Yet, this intuition is mostly empirical and qualitative.
We introduce \emph{feature information dynamics}, an information-theoretic framework for localizing when a feature is generated during diffusion.
Using the I-MMSE identity, we connect the rate of feature mutual information change to a gap between optimal unconditional and feature-conditional denoising losses, yielding practical estimators for feature information density.
We further develop a chained decomposition that separates shared from incremental information in a feature hierarchy. We use this framework first to quantitatively confirm spectral autoregression in pixel diffusion, and then to extend the analysis beyond frequency: under a class$\,\to\,$mask$\,\to\,$Canny conditioning chain, the per-feature information densities differ across pixel, SDVAE, VAVAE, and RAE, exposing fundamental differences between these representations and suggesting that ordered generation could be beneficial for training diffusion models. \textcolor{red}{Our code is available at \url{https://github.com/AI4Science-WestlakeU/feature-information-dynamics}.}
\end{abstract}


\section{Introduction}
\label{sec:introduction}

Diffusion models have achieved remarkable success in visual generation by decomposing the synthesis of complex distributions into a sequence of denoising tasks across varying signal-to-noise ratios~\citep{ddpm, cfm}.
A growing body of empirical evidence suggests that these models act as hierarchical feature generators, progressively building structure across the denoising trajectory~\citep{ddpm,time_rep,emergence,sfd,latent_forcing}. This temporal organization has broad implications for representation learning and generative modeling, rather than being merely descriptive.\footnote{\textcolor{red}{Possible uses for this diagnostic are discussed in Section~\ref{sec:discussion:possible-uses}.}} For example, \citet{time_rep} utilizes diffusion models to learn disentangled representations, and recent generative methods achieve better generation performance by prioritizing global semantic structure over local appearance in diffusion~\citep{sfd,latent_forcing}. Yet, what remains missing is a general, quantitative account of \emph{feature information dynamics}: when, and in what order different features are generated along the diffusion trajectory. We address this question by localizing a structured hierarchy of features across different noise levels in the diffusion process.

We state our contributions upfront:

\begin{enumerate}
  \item \textbf{Feature information dynamics via I-MMSE.}
    \textcolor{red}{Taking classical I-MMSE as a starting point, we define the SNR-resolved information between a specified feature and the noisy image, and relate its derivative to the gap between optimal unconditional and feature-conditional denoising losses.}
    We derive a family of equivalent estimators for this information density, and further exploit the analytic properties of the MMSE curve to give a practical procedure for obtaining the information profile of a general feature.
    We compare these estimators in detail (Appendix~\ref{app:estimators}).

  \item \textbf{Chained information decomposition.}
    When a hierarchy of features is present, the shared information between features must be considered to reveal the full information structure.
    Building on the above theory, we decompose feature information in a chained manner across the diffusion timeline.
    We experimentally verify one widely invoked intuition and report one surprising finding:
    \emph{(i)}~pixel-space image diffusion models indeed approximate spectral autoregression;
    \emph{(ii)}~in pixel-space diffusion, structure can enter earlier and more strongly than class identity, challenging a simple semantic-first account of the denoising trajectory.

  \item \textbf{Representations shape feature information dynamics.}
    Going beyond frequency, we apply the chained framework to a general visual feature hierarchy on ImageNet-256, holding the conditioning chain and evaluation protocol fixed across pixel-space, SDVAE, VAVAE, and RAE diffusion models. The information dynamics differ significantly across representations: in pixel space, mask information enters earlier and more strongly than class identity; only RAE produces a clean class$\,\to\,$mask$\,\to\,$Canny ordering. These findings recast representation choice as a controllable knob over feature information dynamics rather than a black box on top of FID.
\end{enumerate}


\section{Related Work}
\label{sec:related-work}

\paragraph{Information theory and diffusion models.}
The relationship between information theory and deep learning has been mutually enriching: information-theoretic principles have shaped learning objectives and representation learning~\citep{ib,cpc}, while neural networks have in turn enabled scalable estimation of previously intractable quantities such as mutual information~\citep{mine}.
Diffusion models sit naturally at this intersection: their forward process defines a Gaussian channel, making them particularly amenable to an information-theoretic analysis.
\textcolor{red}{\citet{info_diffusion} generalized the well-known information-theoretic I-MMSE identity to provide an interpretation of the diffusion loss.}
\citet{minde}
and \citet{mmg}
leverage the score difference to estimate \emph{static} mutual information quantities, but do not study the information \emph{dynamics} across the diffusion timeline.
Most closely related to our work, \citet{cls_spec} 
use conditional entropy to study how class identity is progressively specified during diffusion sampling.
Their analysis, however, is restricted to categorical features, and tied to the SDE formulation of diffusion models.
Our framework instead provides a general information dynamics framework for arbitrary features, along with practical estimators.

\paragraph{Feature information dynamics in diffusion.}
The question of when different visual information appears in diffusion has been present since the earliest DDPM observations: intermediate $\hat{x}_0$ predictions already suggested that large-scale structure emerges before fine detail~\citep{ddpm}.
In pixel space, this intuition receives a concrete spectral explanation: natural images approximately follow power-law spectra~\citep{img_power}, so Gaussian corruption separates low-frequency global structure from high-frequency texture, motivating the view of diffusion as \emph{spectral autoregression (AR)}~\citep{spec_ar}. Subsequent work provides geometric and mechanistic analyses of timestep-dependent structure and concept emergence~\citep{latent_riemann,emergence}. However, these explanations remain qualitative and cannot be readily generalized to diffusion models operating in representations beyond pixels. This matters because latent representations are now the mainstream compromise between computational efficiency and generation quality~\citep{ldm}, and the choice of representation can substantially affect training efficiency~\citep{vavae, rae}.
A quantitative, representation-agnostic account of \emph{feature information dynamics} is still missing.

\section{Feature Information Density}

Suppose we have a dataset of continuous random vectors $\boldsymbol{X}\sim p(\boldsymbol{x})$, e.g. images, and a corresponding discrete or continuous feature $Y\sim p(y\mid \boldsymbol{x})$, e.g. class labels or frequency components.
Our goal is to quantify, along an additive Gaussian corruption process of $\boldsymbol{X}$, \emph{how the information carried by $Y$
is distributed at different noise levels}.
To this end, we first introduce a classical Gaussian channel and minimum mean square error (MMSE),
then extend the I-MMSE identity to define \emph{feature information density} as the gap between the unconditional and feature-conditional MMSE.
Finally, we instantiate this information-theoretic framework with diffusion models and present estimators that also apply beyond diffusion models.

\subsection{Gaussian channel and MMSE}

Let $\boldsymbol{X} \in \mathbb{R}^d$, $\boldsymbol{X} \sim p(\boldsymbol{x})$, denote the clean data, and let $Y\sim p(y\mid \boldsymbol{x})$ denote a feature of interest.
We consider the Gaussian channel
\begin{equation}
    \boldsymbol{X}_\gamma = \sqrt{\gamma}\, \boldsymbol{X} + \boldsymbol{N},
    \qquad
    \boldsymbol{N} \sim \mathcal{N}(\boldsymbol{0}, I),
    \quad \boldsymbol{N} \perp\!\!\!\perp (\boldsymbol{X}, Y),
\end{equation}
where $\gamma \in \mathbb{R}_{\ge 0}$ is the signal-to-noise ratio (SNR).

Given any (possibly empty) side information $S$, we aim to denoise $\boldsymbol{X}$ from the noisy observation $\boldsymbol{X}_\gamma$ and $S$ under squared-error loss.
The minimum achievable error over all denoisers $\hat{\boldsymbol{x}}(\boldsymbol{X}_\gamma, \gamma, S)$ is the \emph{minimum mean-squared error} (MMSE),
\begin{equation}
    \operatorname{mmse}_{\boldsymbol{X} \mid \boldsymbol{X}_\gamma, S}(\gamma)
    :=
    \min_{\hat{\boldsymbol{x}}}\;
    \mathbb{E}_{\boldsymbol{X}, \boldsymbol{X}_\gamma, S}\!\left[
        \left\| \boldsymbol{X} - \hat{\boldsymbol{x}}(\boldsymbol{X}_\gamma, \gamma, S) \right\|_2^2
    \right].
\end{equation}
A classical result is that this minimum is attained by the conditional expectation $\hat{\boldsymbol{x}}^\star(\boldsymbol{X}_\gamma, \gamma, S) = \mathbb{E}[\boldsymbol{X} \mid \boldsymbol{X}_\gamma, S]$, which we call the \emph{optimal denoiser}; equivalently,
\begin{equation}
    \operatorname{mmse}_{\boldsymbol{X} \mid \boldsymbol{X}_\gamma, S}(\gamma)
    =
    \mathbb{E}_{\boldsymbol{X}, \boldsymbol{X}_\gamma, S}\!\left[
        \left\| \boldsymbol{X} - \mathbb{E}[\boldsymbol{X} \mid \boldsymbol{X}_\gamma, S] \right\|_2^2
    \right].
\end{equation}

We will mainly use two instances, the unconditional and the feature-conditional MMSE:
\begin{equation}
    m_{\varnothing}(\gamma)
    := \operatorname{mmse}_{\boldsymbol{X} \mid \boldsymbol{X}_\gamma}(\gamma),
    \qquad
    m_Y(\gamma)
    := \operatorname{mmse}_{\boldsymbol{X} \mid \boldsymbol{X}_\gamma, Y}(\gamma).
\end{equation}
Since the optimal unconditional denoiser $\hat{\boldsymbol{x}}(\boldsymbol{X}_\gamma, \gamma)$ is also a (possibly suboptimal) feature-conditional denoiser that simply ignores $Y$, the optimal feature-conditional error cannot exceed the unconditional one:
\begin{equation}
    m_Y(\gamma) \le m_{\varnothing}(\gamma)
    \qquad \text{for all } \gamma \ge 0.
\end{equation}
Their difference therefore measures how much the feature $Y$ helps denoise $\boldsymbol{X}$ at SNR $\gamma$, which we call the \emph{single-feature MMSE gap} by conditioning on $Y$:
\begin{equation}
    \Delta_Y(\gamma)
    :=
    m_{\varnothing}(\gamma) - m_Y(\gamma)
    \ge 0.
\end{equation}

\subsection{I-MMSE relation and feature information density}

The I-MMSE identity \citep{immse} states that, for the Gaussian channel above,
\begin{equation}
    \frac{d}{d\gamma} I(\boldsymbol{X}; \boldsymbol{X}_\gamma)
    =
    \frac{1}{2}\, m_\varnothing(\gamma).
\end{equation}
\textcolor{red}{The classical I-MMSE relation describes the mutual information between clean data and a noisy channel output. We next use the same argument to describe the mutual information between a feature and that noisy output.}
Applying the same identity conditionally on $Y$ gives
\begin{equation}
    \frac{d}{d\gamma} I(\boldsymbol{X}; \boldsymbol{X}_\gamma \mid Y)
    =
    \frac{1}{2}\, m_Y(\gamma).
\end{equation}

The channel output $\boldsymbol{X}_\gamma = \sqrt{\gamma}\,\boldsymbol{X} + \boldsymbol{N}$ depends on $Y$ only through $\boldsymbol{X}$, so given $\boldsymbol{X}$ the variables $\boldsymbol{X}_\gamma$ and $Y$ are conditionally independent, i.e. $I(Y; \boldsymbol{X}_\gamma \mid \boldsymbol{X}) = 0$.
The chain rule of mutual information therefore gives
\begin{equation}
    I(\boldsymbol{X}, Y;\, \boldsymbol{X}_\gamma)
    =
    I(\boldsymbol{X}; \boldsymbol{X}_\gamma) + \underbrace{I(Y; \boldsymbol{X}_\gamma \mid \boldsymbol{X})}_{=\,0}
    =
    I(Y; \boldsymbol{X}_\gamma) + I(\boldsymbol{X}; \boldsymbol{X}_\gamma \mid Y),
\end{equation}
so that
\begin{equation}
    I(Y; \boldsymbol{X}_\gamma)
    =
    I(\boldsymbol{X}; \boldsymbol{X}_\gamma) - I(\boldsymbol{X}; \boldsymbol{X}_\gamma \mid Y).
\end{equation}
Differentiating with respect to $\gamma$ yields
\begin{equation}
    D_Y(\gamma)
    :=
    \frac{d}{d\gamma} I(Y; \boldsymbol{X}_\gamma)
    =
    \frac{1}{2}
    \Bigl[
        m_{\varnothing}(\gamma) - m_Y(\gamma)
    \Bigr]
    =
    \frac{1}{2}\Delta_Y(\gamma).
    \label{eq:feature-density-gamma}
\end{equation}
We refer to $D_Y(\gamma)$ as the \emph{feature information density}: it quantifies how the information carried by $Y$ is distributed along the SNR axis.
Integrating it over $\gamma$ recovers the scalar identity $I(Y; \boldsymbol{X}) = \int_0^\infty D_Y(\gamma)\, d\gamma$, exploited by~\citet{mmg} for mutual-information estimation.

More generally, one may parameterize the SNR axis by any smooth, strictly monotone change of variable $\gamma = \gamma(t)$, with derivative $\dot{\gamma}(t) := d\gamma/dt$.
By the chain rule, the feature information density on the new $t$-axis is
\begin{equation}
    \frac{d}{dt} I(Y; \boldsymbol{X}_{\gamma(t)})
    =
    D_Y(\gamma(t))\, \dot{\gamma}(t).
\end{equation}
 A particularly useful instance is the logarithmic SNR axis $t = \log\gamma$, for which $\dot{\gamma}(t) = \gamma$, giving
\begin{equation}
    D_Y^{(\log\gamma)}(\log\gamma)
    :=
    \frac{d}{d \log \gamma} I(Y; \boldsymbol{X}_\gamma)
    =
    \gamma\, D_Y(\gamma)
    =
    \frac{1}{2}\gamma\, \Delta_Y(\gamma).
    \label{eq:feature-density-logsnr}
\end{equation}
Here we keep the same base symbol $D$ and indicate the axis through the superscript.
This representation balances contributions across many decades of SNR and will be useful when comparing densities across different time parameterizations.

\subsection{Diffusion and flow-matching parameterization}

We now instantiate the framework above with diffusion or Gaussian flow-matching models~\citep{cfm}. 
The variance of the Gaussian channel $\boldsymbol{X}_\gamma = \sqrt{\gamma}\,\boldsymbol{X} + \boldsymbol{N}$ grows unboundedly with $\gamma$; it can however be brought back to a bounded range by a $\gamma$-dependent normalization, which is invertible and therefore loses no information.
Through the well-known linear relations among the clean-data, noise, and velocity targets, the MMSE ($\boldsymbol{x}$-prediction loss) transforms equivalently into the standard diffusion or flow-matching training objective, and the optimal denoiser transforms equivalently into the optimal diffusion or flow-matching model.

Unless otherwise specified, we use \textcolor{red}{Gaussian Flow Matching with independent coupling}\footnote{\textcolor{red}{For a non-independent endpoint coupling, GFM is no longer equivalent to a Gaussian channel; that case is outside the scope of this work.}} and the linear interpolation
\begin{equation}
    \boldsymbol{X}_t = (1-t)\, \boldsymbol{N} + t\, \boldsymbol{X},
    \qquad t \in [0,1],
\end{equation}
and velocity target $\boldsymbol{V}_t = \boldsymbol{X} - \boldsymbol{N}$.
Identifying $\boldsymbol{X}_t/(1-t) = \boldsymbol{N} + \sqrt{\gamma(t)}\, \boldsymbol{X} \stackrel{d}{=} \boldsymbol{X}_{\gamma(t)}$ gives the SNR $\gamma(t) = \left(\frac{t}{1-t}\right)^2$.
The optimal flow-matching velocity predictor and the optimal denoiser are related by
\begin{equation}
    \hat{\boldsymbol{x}}^\star(\boldsymbol{X}_t, t) = \boldsymbol{X}_t + (1-t)\,\hat{\boldsymbol{v}}^\star(\boldsymbol{X}_t, t),
\end{equation}
and the MMSE is recovered from the minimum velocity loss as
\begin{equation}
    m_\varnothing(\gamma(t))
    = (1-t)^2 \min_{\hat{\boldsymbol{v}}} \mathbb{E}_{\boldsymbol{X}, \boldsymbol{N}}\!\left[\|\boldsymbol{V}_t - \hat{\boldsymbol{v}}(\boldsymbol{X}_t, t)\|_2^2\right],
    \label{eq:loss-mmse}
\end{equation}
with $m_Y(\gamma(t))$ obtained analogously from the feature-conditional velocity loss.
The single-feature MMSE gap $\Delta_Y(\gamma(t)) = m_\varnothing(\gamma(t)) - m_Y(\gamma(t))$ is therefore obtained directly from the difference of optimal unconditional and feature-conditional flow-matching losses, without altering the training objective.

\subsection{Practical estimators}

Given trained unconditional and feature-conditional denoisers, we estimate the single-feature MMSE gap $\hat{\Delta}_Y(\gamma)$ by Monte Carlo over test samples and form the empirical density $\hat{D}_Y(\gamma) = \tfrac{1}{2}\hat{\Delta}_Y(\gamma)$ as in Eq.~\eqref{eq:feature-density-gamma}.
However, several equivalent estimator variants---posterior-mean, Fisher, score-difference, and likelihood-score forms---are derived and compared in Appendix~\ref{app:estimators}.

In practice, the raw single-feature MMSE gap is reliable only over a finite SNR range.
At very high SNR, denoising errors are small and gradients are weak, so a finite-capacity denoiser can underfit the clean-data limit.
This causes the empirical MMSE to be overestimated in the high-SNR tail, and a direct numerical integral of the gap can therefore be biased upward or even fail to converge.
Inspired by analytical properties of MMSE curves \citep{mmse_dim}, we address this by fitting the \emph{single-feature MMSE gap itself} with a smooth double-pole Lorentzian family on a reliable interval and extrapolating the fitted tail analytically. This simple fit matches the empirical gap surprisingly well, detailed in Appendix~\ref{app:fitting}.

\section{Chained Information Decomposition}

The preceding section defines the information density of a \emph{single} feature $Y$ through the single-feature MMSE gap $m_{\varnothing}(\gamma)-m_Y(\gamma)$.
Many of our experiments, however, concern a hierarchy of related features, such as frequency bands or semantic--structural image cues.
In this setting, computing each single-feature density $D_{Y_k}(\gamma)$ separately can double-count information shared across feature levels.

To isolate the incremental contribution of each feature, we therefore group the first $k$ features into a cumulative bundle $\mathcal{Y}_{\le k}$ and consider a nested group of conditioning bundles
\[
    \mathcal{Y}_{\le 0} = \varnothing \subseteq \mathcal{Y}_{\le 1} \subseteq \cdots \subseteq \mathcal{Y}_{\le K},
\]
for example $\mathcal{Y}_{\le 1} = \text{class}$, $\mathcal{Y}_{\le 2} = (\text{class}, \text{mask})$, and $\mathcal{Y}_{\le 3} = (\text{class}, \text{mask}, \text{canny})$.
The chained construction then measures the incremental information contributed by $Y_k$ relative to $\mathcal Y_{\le k-1}$.

Writing $m_k(\gamma) := \operatorname{mmse}_{\boldsymbol{X} \mid \boldsymbol{X}_\gamma, \mathcal{Y}_{\le k}}(\gamma)$, we define the \emph{chained information density} of $Y_k$ as the incremental information of $\mathcal{Y}_{\le k}$ beyond $\mathcal{Y}_{\le k-1}$. \textcolor{red}{By the same conditional I-MMSE argument as in Section~3.2,}
\begin{equation}
\textcolor{red}{\begin{aligned}
    D_{k \mid k-1}^{(\log\gamma)}(\log\gamma)
    &:= \frac{d}{d\log\gamma} I(\boldsymbol{X}_\gamma;Y_k\mid \mathcal{Y}_{\le k-1}) \\
    &= \tfrac{1}{2}\gamma\,\bigl[m_{k-1}(\gamma)-m_k(\gamma)\bigr] \ge 0,
    \qquad k = 1, \ldots, K.
\end{aligned}}
    \label{eq:chained-density-def}
\end{equation}
\textcolor{red}{We denote the bracketed difference by the chained MMSE gap, $\Delta^{\mathrm{chain}}_k(\gamma):=m_{k-1}(\gamma)-m_k(\gamma)$. Thus, as in Section~3.2, integrating the density gives the corresponding conditional-information increment.}
Figure~\ref{fig:fid-overview} summarizes the operational core of our framework on a concrete visual hierarchy.
At each SNR $\gamma$, we compare an unconditional denoiser to denoisers conditioned on successively richer feature bundles
$\mathcal{Y}_{\le 1} = \text{class} \subseteq \mathcal{Y}_{\le 2} = (\text{class},\text{mask}) \subseteq \mathcal{Y}_{\le 3} = (\text{class},\text{mask},\text{Canny})$;
the chained MMSE gap $\Delta^{\mathrm{chain}}_k(\gamma) = m_{k-1}(\gamma) - m_k(\gamma)$ in Eq.~\eqref{eq:chained-density-def} converts these denoiser-loss differences into per-level information densities $D^{(\log\gamma)}_{k\mid k-1}(\log\gamma)$ along the log-SNR axis.
The peak location and width of each $D^{(\log\gamma)}_{k\mid k-1}$ localize \emph{when} along the diffusion trajectory the corresponding feature is most informative, and the ordering of these peaks gives an empirical signature of how the chosen representation organizes its feature information dynamics.

\begin{figure}[t]
  \centering
  \includegraphics[width=\textwidth]{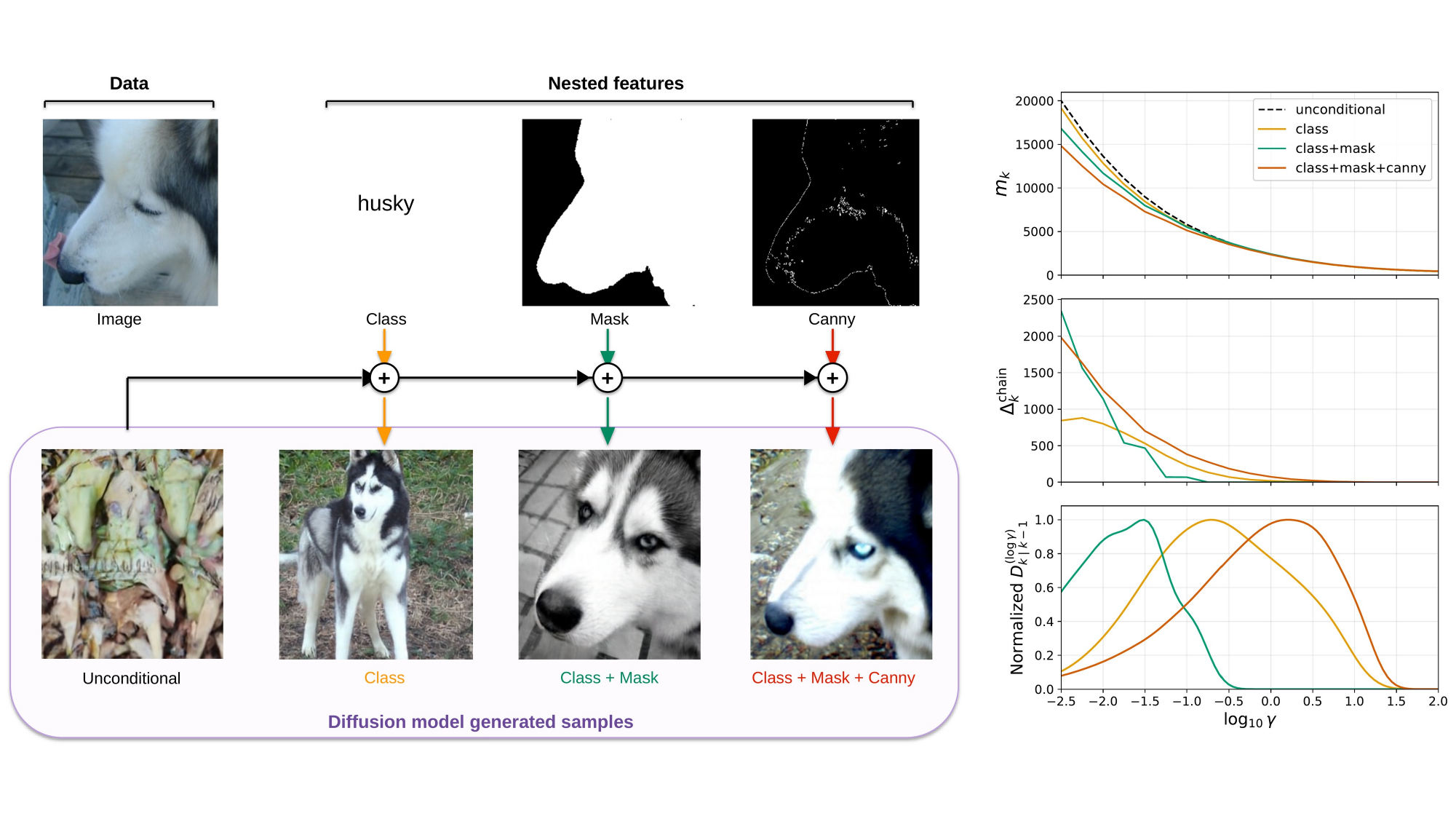}
  \caption{%
    \textcolor{red}{\textbf{Pixel-space example of chained feature information dynamics.}}
    \emph{Top:} a clean image $\boldsymbol{X}$ paired with a nested feature hierarchy of increasing spatial specificity --- class label ($\mathcal{Y}_{\le 1}$), segmentation mask ($\mathcal{Y}_{\le 2}$), and Canny edges ($\mathcal{Y}_{\le 3}$). Conditioning on the cumulative bundle progressively constrains the diffusion model's generated samples (\emph{bottom row}): the unconditional model produces an arbitrary natural image; class fixes semantic identity; adding mask fixes object-level spatial support and pose; adding Canny additionally fixes local boundary structure.
    \emph{Right:} at each SNR $\gamma$, conditioning on a richer bundle reduces $m_k(\gamma)$ (\emph{top right}); the chained MMSE gap $\Delta^{\mathrm{chain}}_k$ in Eq.~\eqref{eq:chained-density-def} (\emph{middle right}) yields the per-level information density $D^{(\log\gamma)}_{k\mid k-1}$ (\emph{bottom right}), which localizes \emph{where} along the trajectory each feature contributes its incremental information. Curves peaking at distinct log-SNR values are the operational signature of a hierarchically organized denoising trajectory.
  }
  \label{fig:fid-overview}
\end{figure}
\textcolor{red}{The chained attribution depends on the chosen feature order because each increment is conditioned on the preceding bundle. We use class$\,\to\,$mask$\,\to\,$Canny as the canonical semantic-to-local order; the implications of this choice are discussed in the \hyperref[sec:discussion:limitations]{Limitations}.}


\section{Experiments}
\label{sec:exp}

\subsection{Chained decomposition reveals spectral AR in pixel diffusion}
\label{sec:exp:spectral-hierarchy}

Pixel-space diffusion is often described as coarse-to-fine: global structure appears at high noise levels, while fine texture is resolved near the clean-data end of the trajectory.
A precise version of this claim is spectral autoregression, where low spatial frequencies are resolved before high spatial frequencies.
Feature information dynamics allows us to test this claim quantitatively, but only after separating incremental frequency information from information shared across bands.

\begin{figure}[t]
  \centering
  \begin{subfigure}[t]{0.46\textwidth}
    \centering
    \includegraphics[width=\linewidth]{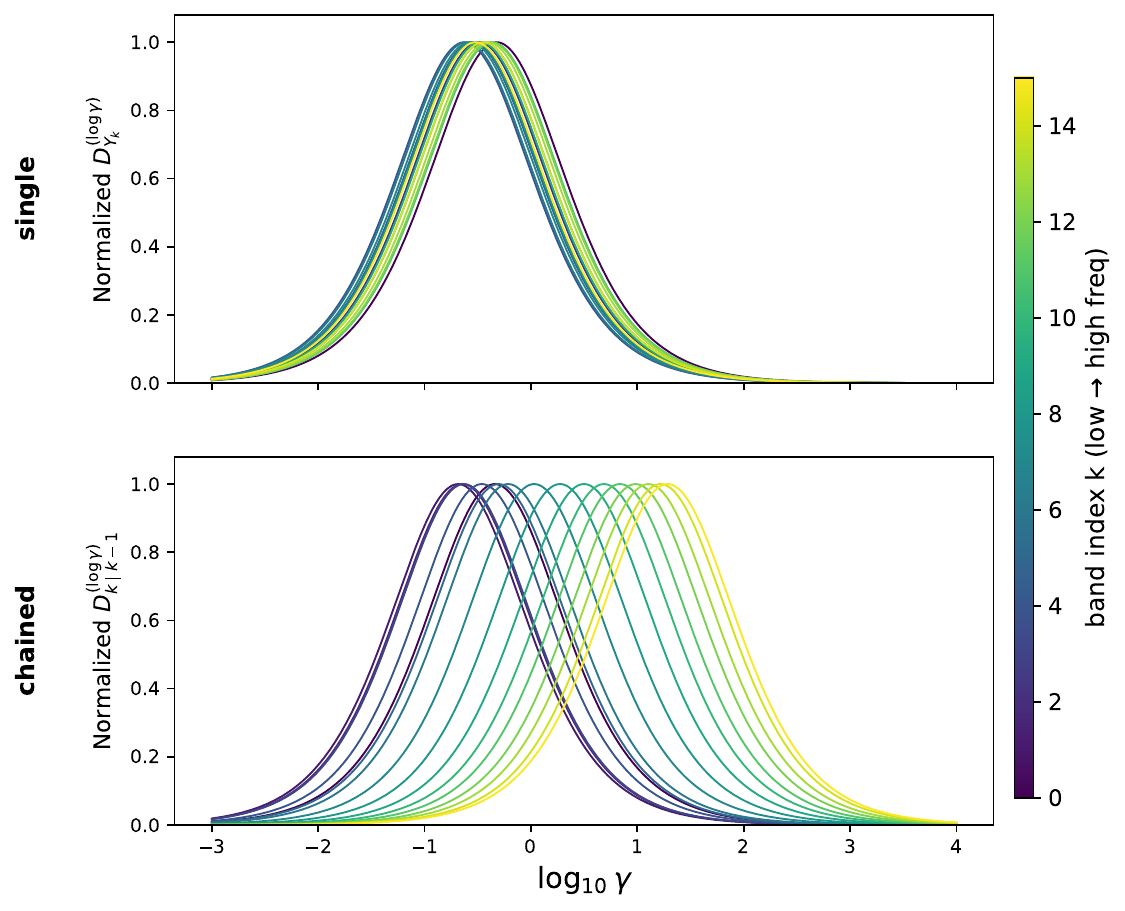}
    \caption{MNIST: single-band (top) vs.\ chained (bottom) conditioning.}
    \label{fig:chained-order-comparison}
  \end{subfigure}
  \hfill
  \begin{subfigure}[t]{0.46\textwidth}
    \centering
    \includegraphics[width=\linewidth]{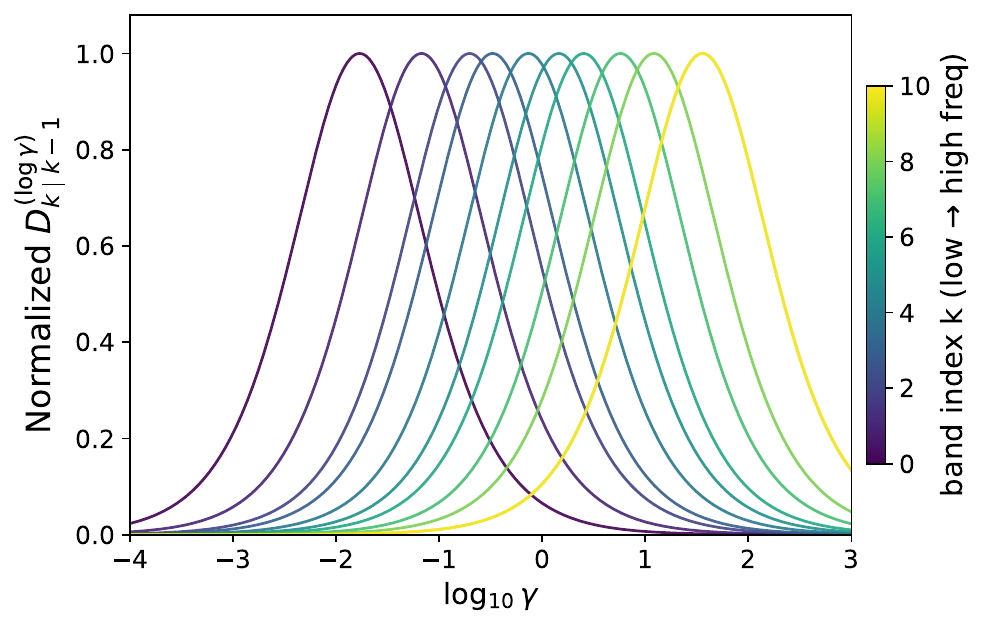}
    \caption{CIFAR-10: chained spectral progression.}
    \label{fig:cifar10-freq}
  \end{subfigure}
  \caption{%
    \textbf{Spectral information densities.}
    In the top row of (a), single-band conditioning produces densities $D^{(\log\gamma)}_{Y_k}$ that overlap heavily because shared low-frequency information is counted in every band; in the bottom row of (a) and in (b), forward chaining removes information already explained by coarser bands, revealing a clean coarse-to-fine progression along $\log\gamma$. Color: low (purple) to high (yellow) frequency band index $k$.
  }
  \label{fig:spectral-densities}
\end{figure}

We start with a controlled MNIST frequency hierarchy.
Each image is decomposed into ordered frequency bands $Y_0,\ldots,Y_K$, from coarse to fine.
We compare two decompositions of the same bands.
In the \emph{single-band} setting, each band $Y_k$ is conditioned independently.
In the \emph{forward-chained} setting, the condition is the cumulative bundle $\mathcal{Y}_{\le k}=(Y_0,\ldots,Y_k)$.
Single-band MMSE gaps and chained MMSE gaps are converted to log-SNR information densities using Eq.~\eqref{eq:feature-density-logsnr} and Eq.~\eqref{eq:chained-density-def}, respectively.
Figure~\ref{fig:chained-order-comparison} compares the resulting densities: single-band conditioning produces overlapping densities because shared information is counted repeatedly, while forward chaining removes information already explained by coarser bands and reveals a clean coarse-to-fine progression.

We then apply the same chained construction to CIFAR-10.
Full training and evaluation details are given in Appendix~\ref{app:spectral-hierarchy-details}.
Figure~\ref{fig:cifar10-freq} shows a monotone spectral progression.
Low-frequency bands concentrate their information at lower log-SNR, whereas higher-frequency bands shift toward cleaner denoising states.
Thus pixel-space diffusion approximately follows spectral autoregression in a quantitative sense: feature information density identifies not only that coarse frequencies precede fine ones, but also where along the SNR axis each incremental frequency band is resolved.

\subsection{Representations reshape hierarchical information dynamics}
\label{sec:exp:representation-analysis}

The spectral experiment confirms a coarse-to-fine ordering for frequency bands, but it does not by itself say how semantic, structural, and textural information are generated in modern image diffusion models.
We now apply chained feature information dynamics to a general visual feature hierarchy --- class labels (global semantics, high-level), segmentation masks (object-level spatial support, mid-level), and Canny edges (local boundary structure, low-level) --- and ask, with the same data and the same conditioning chain, how four representations reorganize the noise levels at which each level is generated.

We compare four representations chosen to span the current latent-diffusion design landscape, each paired with the standard generator backbone reported in its original work:
\emph{(i)}~\textbf{pixel} space, the unprocessed image grid, with the JiT-L/16 backbone~\citep{jit};
\emph{(ii)}~\textbf{SDVAE}~\citep{ldm}, a continuous VAE latent that balances computational efficiency and reconstruction quality, with the SiT-XL/2 backbone~\citep{sit};
\emph{(iii)}~\textbf{VAVAE}~\citep{vavae}, which augments a VAE-style latent with vision-foundation-model alignment to accelerate diffusion convergence, with the LightningDiT-XL/1 backbone~\citep{vavae};
\emph{(iv)}~\textbf{RAE}~\citep{rae}, a representation autoencoder built directly on DINOv2 features, with the DiTDH-XL backbone~\citep{rae}.
This selection is intentionally small but covers the dominant latent-diffusion design axis (no encoder $\to$ reconstruction-driven $\to$ VFM-aligned $\to$ pure VFM).
We use a SAM~3.1-validated subset of ImageNet-256: for each ImageNet class we run SAM~3.1~\citep{sam3} on the training images and retain only those classes for which the mask success rate exceeds $95\%$ (485 classes); Canny edges are computed inside the retained mask. The paired \{image, class, mask, Canny\} dataset is shared verbatim across all representations.

\textbf{Roadmap.} Section~\ref{sec:exp:training-speed} runs a tightly controlled training-speed test that establishes the convergence ordering pixel $<$ SDVAE $<$ VAVAE $<$ RAE under matched optimization recipes, and points out that \emph{why} representations differ in training speed remains an open problem. Section~\ref{sec:exp:hierarchy-diagnostic} then uses the chained feature information dynamics, together with the hierarchical latent-variable prior, as a diagnostic for this gap and reads the per-representation panoramas through that lens.

\subsubsection{Training-speed test under matched recipes}
\label{sec:exp:training-speed}

Existing comparisons of training speed across image diffusion representations are mostly based on published FID curves that differ in \emph{both} generator architecture and training recipe (optimizer, time-shift schedule, sampler, learning-rate warmup, number of FID samples, etc.).
This makes it difficult to attribute observed differences to the representation alone rather than to recipe choices.
We therefore design a representation-only training-speed test in which the recipe variables that most affect early FID --- training noise shift $\alpha_t$, sampling noise shift $\alpha_s$, and learning rate --- are first selected on a smaller LDiT-B model with a fixed grid (Appendix~\ref{app:rep-exp-details}), and then frozen across all four XL trajectories.
All four representations share the same number of tokens (256), the same model (LDiT-XL), the same sampler (Euler with 50 NFE, unconditional inference).
What remains different is exactly what defines each representation: encoder family, native patch size, channel count, and prediction parameterization ($\boldsymbol{x}$-prediction~\citep{jit} for pixel/RAE, velocity for SDVAE/VAVAE), as required by the standard recipe of each space.

\begin{figure}[t]
  \centering
  \includegraphics[width=0.62\textwidth]{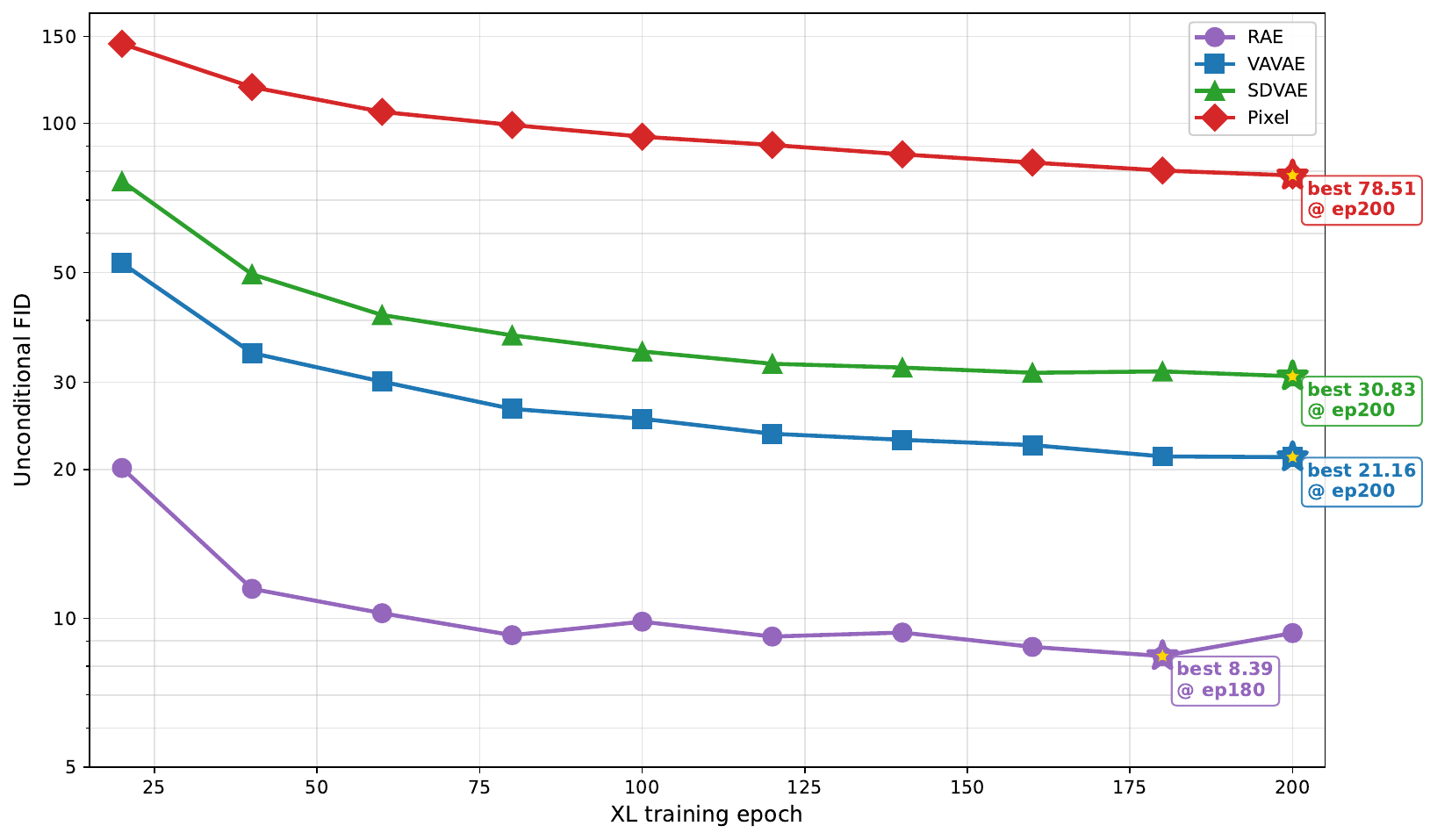}
  \caption{%
    \textbf{Unconditional FID trajectories under a matched recipe.}
    Best FID reached over 200 XL epochs: RAE $8.39$ (ep.~180) $<$ VAVAE $21.16$ $<$ SDVAE $30.83$ $<$ pixel $78.51$ (all at ep.~200). Hyperparameters $(\alpha_t, \alpha_s, \mathrm{lr})$ are selected on a smaller LDiT-B model and then frozen across the XL trajectories.
  }
  \label{fig:rep-uncond-fid}
\end{figure}

Figure~\ref{fig:rep-uncond-fid} shows the resulting FID trajectories.
The four representations do not merely reach different final FID values; they follow visibly distinct convergence curves whose ordering is consistent across the whole 200-epoch budget:
$$\textbf{pixel} \;<\; \textbf{SDVAE} \;<\; \textbf{VAVAE} \;<\; \textbf{RAE},$$
with best FID $78.51$ (pixel, ep.~200) $\to$ $30.83$ (SDVAE, ep.~200) $\to$ $21.16$ (VAVAE, ep.~200) $\to$ $8.39$ (RAE, ep.~180).
This ordering is robust to the recipe variables we explicitly controlled and is consistent with the qualitative ranking reported in the original VAVAE and RAE papers.
What this experiment does \emph{not} explain is the mechanism: it is an open problem why a particular representation makes diffusion training easier or harder under a matched optimization recipe.

\subsubsection{Hierarchical information-dynamics diagnostic}
\label{sec:exp:hierarchy-diagnostic}

We now use the chained feature information dynamics, together with the hierarchical latent-variable prior implicit in the class$\,\to\,$mask$\,\to\,$Canny chain, as a diagnostic for the convergence-speed ordering established above.
For every representation, we initialize each conditioning stage from the publicly released pretrained checkpoint of the corresponding backbone~\citep{jit,sit,vavae,rae} and then fine-tune for the new condition following official pipelines with necessary modifications, so that each $m_k(\gamma)$ is evaluated near the best diffusion-loss / MMSE floor reachable in that space rather than at an early-stage training optimum. The illustration in Figure~\ref{fig:fid-overview} visualizes the pixel instance of this chain: the four sample columns are produced by phase-wise fine-tuning a JiT-L/16~\citep{jit} model through the unconditional, class, class$+$mask, and class$+$mask$+$Canny modes, and the right-hand panels are the corresponding chained MMSE curves and per-level densities.
The appendix panoramas (Figure~\ref{fig:rep-hierarchy-panorama}) give the analogous diagnostic for SDVAE, VAVAE, and RAE.

Detailed conditioning protocol, token counts, and per-stage hyperparameters are in Appendix~\ref{app:rep-exp-details}; the full per-representation panoramas (MMSE curves, chained MMSE gaps, log-SNR information densities, and cumulative information) are in Figure~\ref{fig:rep-hierarchy-panorama}.

\begin{figure}[t]
  \centering
  \includegraphics[width=0.60\textwidth]{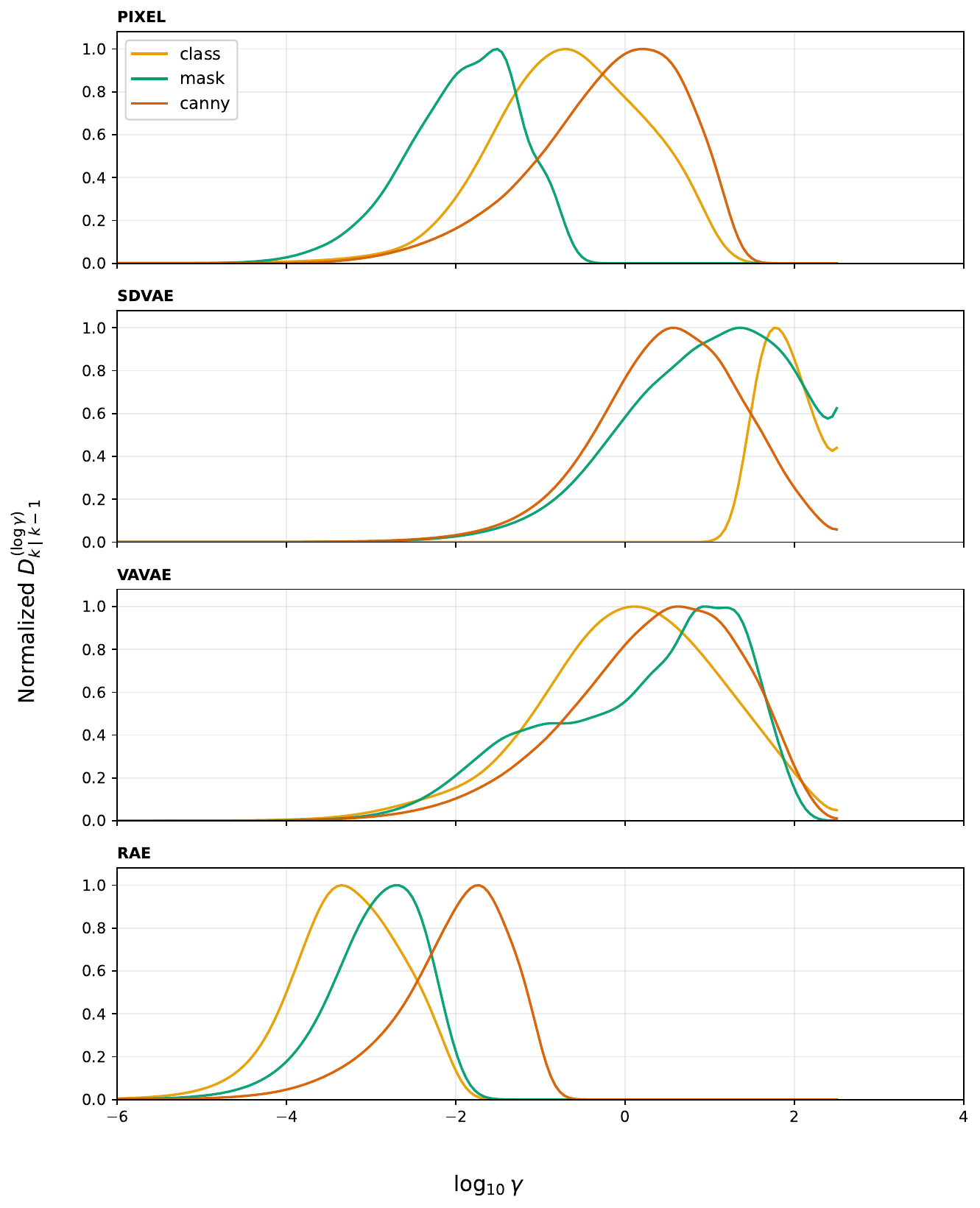}
  \caption{%
    \textbf{Layer-wise feature information dynamics across representations.}
    Per-feature log-SNR information densities $D^{(\log\gamma)}_{k\mid k-1}$ of the class, mask, and Canny increments for pixel, SDVAE, VAVAE, and RAE diffusion. Full panoramas in Figure~\ref{fig:rep-hierarchy-panorama}.
  }
  \label{fig:rep-hierarchy}
\end{figure}

Figure~\ref{fig:rep-hierarchy} reveals that the same conditioning chain produces qualitatively different feature information dynamics across representations, and that these differences track the convergence-speed ordering of Section~\ref{sec:exp:training-speed}.

\begin{itemize}
  \item \textbf{Pixel space (slowest convergence) inverts the classical hierarchy.}
    The mask density peaks at lower log-SNR than the class density, so mask information enters earlier and more strongly than class identity. Pixel diffusion does \emph{not} resolve semantics first.
  \item \textbf{VAVAE (faster convergence) recovers the semantic-first half.}
    The class increment is now prioritized, with its peak located at lower log-SNR than the mask peak, but mask and Canny still overlap rather than separate cleanly along $\log\gamma$.
  \item \textbf{Only RAE (fastest convergence) achieves a strict class$\,\to\,$mask$\,\to\,$Canny ordering.}
    Its three density peaks land in textbook high$\,\to\,$mid$\,\to\,$low order along the log-SNR axis, matching the classical visual hierarchy.
\end{itemize}

We therefore observe a correlation between how cleanly a representation hierarchically orders its feature densities and how quickly its diffusion model converges, without claiming that this ordering causes faster convergence.
\begin{table}[t]
\color{red}
\centering
\small
\begin{tabular}{@{}llrrr@{}}
\toprule
Model / checkpoint & Conditioning order & Class & Mask & Canny \\
\midrule
RAE-S, 20 epochs & class--mask--Canny & $-5.00$ & $-3.50$ & $-2.00$ \\
RAE-S, 40 epochs & class--mask--Canny & $-3.75$ & $-2.75$ & $-1.50$ \\
RAE-S, 60 epochs & class--mask--Canny & $-3.75$ & $-2.75$ & $-1.75$ \\
RAE-S, 80 epochs & class--mask--Canny & $-3.75$ & $-2.75$ & $-1.75$ \\
RAE/DiTDH-XL & class--mask--Canny & $-3.75$ & $-2.75$ & $-1.75$ \\
RAE/DiTDH-XL & mask--class--Canny & $-4.00$ & $-3.25$ & $-1.75$ \\
\bottomrule
\end{tabular}
\vspace{4pt}
\caption{\color{red}Peak log-SNRs for the model-size, checkpoint, and chain-order checks. RAE-S has approximately 132M parameters and the DiTDH-XL model approximately 842M.}
\label{tab:camera-ready-stability}
\end{table}

\textbf{The central message is that representation choice acts as a controllable knob over feature information dynamics:} the same conditioning chain, evaluated against the same images and masks, yields representation-specific orderings that are invisible from FID alone.


\section{Discussion}
\label{sec:discussion}

\subsection{\textcolor{red}{Generation order in vision and language}}
\label{sec:discussion:generation-order}
The hierarchical-ordering view of diffusion that emerges here connects to a long line of work on \emph{generation order} in non-diffusion generative models.
On the vision side, next-scale prediction (VAR)~\citep{var} and masked-token refinement (MaskGIT)~\citep{maskgit} explicitly hard-code a coarse-to-fine generation order, and report that this order improves both sample quality and parallelism over standard left-to-right tokenization.
On the language side, XLNet~\citep{xlnet} permutes the autoregressive factorization to expose bidirectional context, while insertion-based sequence models~\citep{insertion_transformer} expose generation order itself as a controllable design choice with quality--parallelism trade-offs.
A common message is that, even when the underlying joint distribution is fixed, \emph{the order in which features are introduced} changes the optimization geometry seen by the model.
Our framework reads diffusion in the same vocabulary: chaining the conditioning bundle along a feature hierarchy makes the implicit generation order along the noise schedule directly observable as the placement of $D^{(\log\gamma)}_{k\mid k-1}$ peaks.
The cross-representation finding in Section~\ref{sec:exp:hierarchy-diagnostic} is therefore best understood as an order-discovery result: pixel, SDVAE, VAVAE, and RAE realize different implicit orders for the same data and the same conditioning chain, and the cleanest order coincides with the fastest training.

\subsection{\textcolor{red}{Limitations}}
\label{sec:discussion:limitations}
\textcolor{red}{Our framework has several limitations. Two are especially consequential for interpreting the empirical profiles and are examined directly below.}

\paragraph{\textcolor{red}{Approximation by trained denoisers.}}
\textcolor{red}{The estimator uses trained denoisers to approximate Bayes-optimal MMSE terms, so the measured gaps can depend on optimization quality as well as on the underlying information. The order, checkpoint, and model-size checks in Table~\ref{tab:camera-ready-stability} provide a direct check: the alternative mask$\,\to\,$class$\,\to\,$Canny order shifts the class and mask peaks but preserves their separation on RAE, while approximately 132M-parameter RAE-S checkpoints and the approximately 842M-parameter DiTDH-XL model show the same late-stage ordering and peak locations. Appendix~\ref{app:hierarchy-robustness} also records that pretrained-checkpoint initialization and training from scratch can lead to different measured conclusions.}

\paragraph{\textcolor{red}{Dependence on feature order.}}
\textcolor{red}{The chained attribution is order-dependent because each increment is conditioned on the preceding feature bundle; it is therefore not an order-independent assignment of information. We use class$\,\to\,$mask$\,\to\,$Canny as a canonical semantic-to-local order, and the alternative-order result in Table~\ref{tab:camera-ready-stability} tests how this choice affects the profiles.}

\paragraph{\textcolor{red}{Other limitations.}}
\textcolor{red}{The cross-representation comparison is limited by computational budget to four representations and one conditioning chain. The mutual-information quantities are distribution-level averages, so the profiles need not describe every individual image. The density is a descriptive diagnostic, and the observed association between hierarchical ordering and convergence speed does not establish a causal mechanism. The analysis also assumes an additive Gaussian channel with independent coupling; settings with a non-independent endpoint coupling require an extension beyond the present scope. Finally, applying the direct chained estimator to a larger hierarchy requires additional cumulative conditional models, which can become a practical bottleneck.}

\subsection{\textcolor{red}{Possible uses}}
\label{sec:discussion:possible-uses}
\textcolor{red}{The diagnostic suggests several directions for application-driven study. First, before committing to a full diffusion-model training run, researchers could compare representations by asking whether semantic, geometric, and local information are separated along the SNR axis; in our experiments, cleaner separation accompanies faster convergence, so the profiles may serve as a representation-selection signal without being treated as a proven predictor. Second, the locations and widths of the density peaks could inform candidate timestep schedules or conditioning orders by identifying the SNR ranges in which particular features are most active. Third, conditional samplers could use feature- or SNR-dependent guidance strengths instead of applying the same guidance throughout the trajectory. Finally, comparing these profiles across architectures or representations could help diagnose where information becomes entangled or remains unresolved, and could motivate representation or conditioning designs targeted at those stages.}


\section{Conclusion}
\label{sec:conclusion}

We presented \emph{feature information dynamics}, a quantitative framework that localizes when information about a feature enters the diffusion trajectory by combining the I-MMSE identity with unconditional and feature-conditional denoising losses.
The chained decomposition extends this to nested feature hierarchies and isolates the incremental contribution of each level.
Empirically, pixel diffusion follows spectral autoregression in the chained sense, and applying the same conditioning chain (class$\,\to\,$mask$\,\to\,$Canny) across pixel, SDVAE, VAVAE, and RAE diffusion produces qualitatively different feature information dynamics: only RAE achieves a strict class$\,\to\,$mask$\,\to\,$Canny ordering of density peaks, while pixel and SDVAE deviate.
This recasts representation choice as a controllable knob over feature information dynamics and possibly diffusion training efficiency, an interplay between representation and generation to be explored.

\begin{ack}
\textcolor{red}{We gratefully acknowledge the support of Hangzhou West Lake Pearl Program; Westlake University Research Center for Industries of the Future; and Westlake University Center for High-performance Computing.}

\textcolor{red}{The authors declare no competing interests.}
\end{ack}

\bibliographystyle{plainnat}
\bibliography{references}

\clearpage
\appendix

\section{Equivalent Estimators}
\label{app:estimators}

The feature information density $D_Y(\gamma)$ defined in Eq.~\eqref{eq:feature-density-gamma} admits several equivalent expressions, each corresponding to a distinct practical estimator.
We begin by recalling the \emph{Fisher information} of the channel output, defined as the expected squared norm of the score:
\begin{equation}
    J(X_\gamma) := \E_{\boldsymbol{X}_\gamma}\!\left[\|\nabla\log p(\boldsymbol{X}_\gamma)\|^2\right],
    \qquad
    J(X_\gamma \mid Y) := \E_{\boldsymbol{X}_\gamma, Y}\!\left[\|\nabla\log p(\boldsymbol{X}_\gamma \mid Y)\|^2\right],
\end{equation}
where $\nabla \log p(x_\gamma)$ denotes the gradient of $\log p$ with respect to its argument $x_\gamma$.

\begin{proposition}[Equivalent Estimators]
\label{prop:equivalence-app}
Under standard regularity (finite second moments, smooth densities), the feature information density admits the equivalent forms
\begin{align}
    D_Y(\gamma)
    &= \tfrac{1}{2}\bigl[m_\varnothing(\gamma) - m_Y(\gamma)\bigr]
    && \text{(single-feature MMSE gap)}
    \label{eq:est-mmse-app} \\
    &= \tfrac{1}{2}\,\E_{\boldsymbol{X}_\gamma, Y}\!\left[\bigl\|\hat{\boldsymbol{x}}^\star(\boldsymbol{X}_\gamma, \gamma, Y) - \hat{\boldsymbol{x}}^\star(\boldsymbol{X}_\gamma, \gamma)\bigr\|^2\right]
    && \text{(denoiser gap)}
    \label{eq:est-denoiser-app} \\
    &= \tfrac{1}{2\gamma}\,\E_{\boldsymbol{X}_\gamma, Y}\!\left[\bigl\|\nabla\log p(\boldsymbol{X}_\gamma \mid Y) - \nabla\log p(\boldsymbol{X}_\gamma)\bigr\|^2\right]
    && \text{(score gap)}
    \label{eq:est-score-app} \\
    &= \tfrac{1}{2\gamma}\bigl[J(X_\gamma \mid Y) - J(X_\gamma)\bigr]
    && \text{(Fisher info gap)}
    \label{eq:est-fisher-app} \\
    &= \tfrac{1}{2\gamma}\,\E_{\boldsymbol{X}_\gamma, Y}\!\left[\bigl\|\nabla\log p(Y \mid \boldsymbol{X}_\gamma)\bigr\|^2\right]
    && \text{(likelihood score).}
    \label{eq:est-lscore-app}
\end{align}
\end{proposition}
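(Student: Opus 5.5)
The first equality, the single-feature MMSE gap, is just Eq.~\eqref{eq:feature-density-gamma}, so all the work is in linking the remaining four forms to it. I would prove the chain in the order it is stated. Each step uses one of two facts: orthogonality of conditional expectations, and Tweedie's formula for the Gaussian channel $\boldsymbol{X}_\gamma=\sqrt{\gamma}\,\boldsymbol{X}+\boldsymbol{N}$.

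\emph{Denoiser gap.} Write $\hat{\boldsymbol{x}}_Y^\star=\E[\boldsymbol{X}\mid\boldsymbol{X}_\gamma,Y]$ and $\hat{\boldsymbol{x}}^\star=\E[\boldsymbol{X}\mid\boldsymbol{X}_\gamma]$. Decompose
\[
\boldsymbol{X}-\hat{\boldsymbol{x}}^\star=(\boldsymbol{X}-\hat{\boldsymbol{x}}_Y^\star)+(\hat{\boldsymbol{x}}_Y^\star-\hat{\boldsymbol{x}}^\star).
\]
The second term is measurable with respect to $(\boldsymbol{X}_\gamma,Y)$. The first term has zero conditional mean given $(\boldsymbol{X}_\gamma,Y)$, so the cross term vanishes by the tower property. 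Taking expectations gives
\[
m_\varnothing=m_Y+\E\|\hat{\boldsymbol{x}}_Y^\star-\hat{\boldsymbol{x}}^\star\|^2,
\]
which is the Pythagorean identity and proves Eq.~\eqref{eq:est-denoiser-app}. Finite second moments make every term finite.

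\emph{Score gap.} Tweedie's formula, applied both unconditionally and conditionally on $Y$, gives
\[
\nabla\log p(\boldsymbol{x}_\gamma)=\sqrt{\gamma}\,\hat{\boldsymbol{x}}^\star-\boldsymbol{x}_\gamma,
\qquad
\nabla\log p(\boldsymbol{x}_\gamma\mid y)=\sqrt{\gamma}\,\hat{\boldsymbol{x}}_Y^\star-\boldsymbol{x}_\gamma .
\]
The conditional version holds because $\boldsymbol{N}\perp\!\!\!\perp(\boldsymbol{X},Y)$, so given $Y=y$ the channel is still Gaussian with input law $p(\boldsymbol{x}\mid y)$. Subtracting, the score difference equals $\sqrt{\gamma}\,(\hat{\boldsymbol{x}}_Y^\star-\hat{\boldsymbol{x}}^\star)$. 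Its squared norm is $\gamma$ times the denoiser gap, which yields the factor $1/(2\gamma)$ in Eq.~\eqref{eq:est-score-app}.

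\emph{Fisher gap and likelihood score.} For the Fisher form I would expand
\[
\E\|s_Y-s\|^2=J(X_\gamma\mid Y)+J(X_\gamma)-2\,\E\langle s_Y,s\rangle ,
\]
where $s_Y$ and $s$ denote the conditional and unconditional scores. The task is to show $\E\langle s_Y,s\rangle=J(X_\gamma)$. Conditioning on $\boldsymbol{X}_\gamma$ reduces this to $\E[s_Y\mid\boldsymbol{X}_\gamma]=s(\boldsymbol{X}_\gamma)$, which follows from $p(\boldsymbol{x}_\gamma)=\int p(\boldsymbol{x}_\gamma\mid y)\,p(dy)$ by differentiating under the integral:
\[
\nabla p(\boldsymbol{x}_\gamma)=\int p(\boldsymbol{x}_\gamma\mid y)\,s_Y\,p(dy),
\]
and dividing by $p(\boldsymbol{x}_\gamma)$. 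Equivalently, $s$ is the $L^2$ projection of $s_Y$ onto functions of $\boldsymbol{X}_\gamma$, and the same Pythagorean argument as in the denoiser step gives Eq.~\eqref{eq:est-fisher-app}. For the likelihood-score form, Bayes' rule $\log p(y\mid\boldsymbol{x}_\gamma)=\log p(\boldsymbol{x}_\gamma\mid y)-\log p(\boldsymbol{x}_\gamma)+\log p(y)$ shows that the gradient in $\boldsymbol{x}_\gamma$ of $\log p(y\mid\boldsymbol{x}_\gamma)$ is exactly $s_Y-s$. Substituting into the score gap gives Eq.~\eqref{eq:est-lscore-app}.

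The main obstacle is regularity rather than algebra. Two interchanges need justification: differentiating under the integral over $y$ in the Fisher step, and the integrability of the scores. The Gaussian convolution makes $p(\cdot\mid y)$ smooth with derivatives bounded by Gaussian moments of $\boldsymbol{X}$. Under finite second moments, Tweedie's formula then gives $J(X_\gamma\mid Y)<\infty$ for every $\gamma>0$, which licenses both steps by dominated convergence. At $\gamma=0$ the identities are read as limits.
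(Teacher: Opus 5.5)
Your proposal is correct and follows essentially the same route as the paper: orthogonality of the conditional-mean error for the denoiser gap, Tweedie's formula (applied conditionally on $Y$) for the score gap, the tower identity $\E[\nabla\log p(\boldsymbol{X}_\gamma\mid Y)\mid\boldsymbol{X}_\gamma]=\nabla\log p(\boldsymbol{X}_\gamma)$ for the Fisher gap, and Bayes' rule for the likelihood score. The only small difference is how you get that tower identity: you differentiate the mixture $p(\boldsymbol{x}_\gamma)=\int p(\boldsymbol{x}_\gamma\mid y)\,p(dy)$ under the integral, whereas the paper pushes the tower property of the denoisers through the linearity of Tweedie's formula; your regularity remarks are a useful addition that the paper leaves implicit.
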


\begin{proof}
\noindent\textbf{Denoiser gap, Eq.~\eqref{eq:est-denoiser-app}.}\quad
This proof was first given by \citet{ortho}, and we rephrase here for completeness. At each $\gamma$, by the definitions of $m_\varnothing$ and $m_Y$,
\begin{equation*}
    m_\varnothing(\gamma) - m_Y(\gamma)
    = \E_{\boldsymbol{X}, \boldsymbol{X}_\gamma, Y}\!\left[\bigl\|\boldsymbol{X} - \hat{\boldsymbol{x}}^\star(\boldsymbol{X}_\gamma, \gamma)\bigr\|^2 - \bigl\|\boldsymbol{X} - \hat{\boldsymbol{x}}^\star(\boldsymbol{X}_\gamma, \gamma, Y)\bigr\|^2\right].
\end{equation*}
Apply the algebraic identity $\|A\|^2 - \|B\|^2 = \|A - B\|^2 + 2\, B \cdot (A - B)$ with $A = \boldsymbol{X} - \hat{\boldsymbol{x}}^\star(\boldsymbol{X}_\gamma, \gamma)$ and $B = \boldsymbol{X} - \hat{\boldsymbol{x}}^\star(\boldsymbol{X}_\gamma, \gamma, Y)$, so that $A - B = \hat{\boldsymbol{x}}^\star(\boldsymbol{X}_\gamma, \gamma, Y) - \hat{\boldsymbol{x}}^\star(\boldsymbol{X}_\gamma, \gamma)$.
The cross term $\E[2 B \cdot (A - B)]$ vanishes: conditioning on $(\boldsymbol{X}_\gamma, Y)$, the factor $A - B$ is deterministic, while $\E[\boldsymbol{X} - \hat{\boldsymbol{x}}^\star(\boldsymbol{X}_\gamma, \gamma, Y) \mid \boldsymbol{X}_\gamma, Y] = 0$ by definition of the optimal denoiser.
Hence $m_\varnothing(\gamma) - m_Y(\gamma) = \E\|\hat{\boldsymbol{x}}^\star(\boldsymbol{X}_\gamma, \gamma, Y) - \hat{\boldsymbol{x}}^\star(\boldsymbol{X}_\gamma, \gamma)\|^2$, giving Eq.~\eqref{eq:est-denoiser-app}.

\medskip
\noindent\textbf{Score gap, Eq.~\eqref{eq:est-score-app}.}\quad
By Tweedie's formula, the score is a linear function of the optimal denoiser:
\begin{equation}
    \nabla\log p(x_\gamma) = \sqrt{\gamma}\,\hat{\boldsymbol{x}}^\star(x_\gamma, \gamma) - x_\gamma,
    \qquad
    \nabla\log p(x_\gamma \mid y) = \sqrt{\gamma}\,\hat{\boldsymbol{x}}^\star(x_\gamma, \gamma, y) - x_\gamma.
    \label{eq:tweedie}
\end{equation}
Subtracting cancels the $-x_\gamma$ term, so
\begin{equation*}
    \nabla\log p(x_\gamma \mid y) - \nabla\log p(x_\gamma) = \sqrt{\gamma}\bigl[\hat{\boldsymbol{x}}^\star(x_\gamma, \gamma, y) - \hat{\boldsymbol{x}}^\star(x_\gamma, \gamma)\bigr].
\end{equation*}
Squaring, taking expectation, and dividing by $2\gamma$ recovers Eq.~\eqref{eq:est-score-app} from Eq.~\eqref{eq:est-denoiser-app}.
The score-gap form has been widely adopted in mutual-information estimation work based on diffusion or score-matching models~\citep{minde, cls_spec}.

\medskip
\noindent\textbf{Fisher info gap, Eq.~\eqref{eq:est-fisher-app}.}\quad
By the law of total expectation, $\hat{\boldsymbol{x}}^\star(\boldsymbol{X}_\gamma, \gamma) = \E_{Y|\boldsymbol{X}_\gamma}[\hat{\boldsymbol{x}}^\star(\boldsymbol{X}_\gamma, \gamma, Y) \mid \boldsymbol{X}_\gamma]$ (the unconditional denoiser is the conditional expectation of the conditional one over $Y \mid \boldsymbol{X}_\gamma$); via Eq.~\eqref{eq:tweedie}, the same tower identity holds for the score:
\begin{equation*}
    \E_Y\!\left[\nabla\log p(\boldsymbol{X}_\gamma \mid Y) \mid \boldsymbol{X}_\gamma\right] = \nabla\log p(\boldsymbol{X}_\gamma).
\end{equation*}
Expanding the squared norm in Eq.~\eqref{eq:est-score-app}, the cross term reduces to $-2\E\|\nabla\log p(\boldsymbol{X}_\gamma)\|^2 = -2J(X_\gamma)$ by this identity, leaving
\begin{equation*}
    \E\bigl\|\nabla\log p(\boldsymbol{X}_\gamma \mid Y) - \nabla\log p(\boldsymbol{X}_\gamma)\bigr\|^2
    = J(X_\gamma \mid Y) - 2\, J(X_\gamma) + J(X_\gamma)
    = J(X_\gamma \mid Y) - J(X_\gamma).
\end{equation*}
Dividing by $2\gamma$ gives Eq.~\eqref{eq:est-fisher-app}.

\medskip
\noindent\textbf{Likelihood score, Eq.~\eqref{eq:est-lscore-app}.}\quad
This identity resonates with Classifier-Free Guidance \citep{cfg}: from $p(x_\gamma \mid y) = p(y \mid x_\gamma)\, p(x_\gamma) / p(y)$, take logarithm and gradient with respect to $x_\gamma$,
\begin{equation*}
    \nabla\log p(x_\gamma \mid y) - \nabla\log p(x_\gamma) = \nabla\log p(y \mid x_\gamma).
\end{equation*}
Substituting into Eq.~\eqref{eq:est-score-app} yields Eq.~\eqref{eq:est-lscore-app}.
\end{proof}

\paragraph{Comparison of estimators.}
\textcolor{red}{The five equivalent forms in Proposition~\ref{prop:equivalence-app} differ in numerical stability, the number of models required, and the SNR regime in which they are most reliable; Table~\ref{tab:estimators} summarizes the practical trade-offs.}

\begin{table}[h]
\centering
\small
\begin{tabular}{@{}lp{5cm}p{5cm}@{}}
\toprule
Estimator & Advantage & Disadvantage \\
\midrule
Single-feature MMSE gap & Directly reuses denoiser training; clear information-theoretic meaning & Requires two independent models; large-number subtraction amplifies variance \\
Denoiser gap & Equivalent to the single-feature MMSE gap but computed as a squared $\ell^2$ distance between two denoisers; numerically more stable & Still requires two denoiser models \\
Score gap & Natural interface with score-based models; equivalent to denoiser gap by Tweedie & Score training is less stable; the $1/\gamma$ factor amplifies errors at low SNR \\
Likelihood score & Only one model needed (a $y$-conditional likelihood-score predictor) & Requires a reverse-direction model; $1/\gamma$ singularity as $\gamma \to 0$ \\
\bottomrule
\end{tabular}
\caption{Comparison of estimators for the feature information density.}
\label{tab:estimators}
\end{table}

\paragraph{Asymmetry of mutual information estimation.}
A consequence of the equivalence family is that information density estimation can be done bidirectionally, while the difficulty can differ. Since $Y$ is a feature less complicated than $\boldsymbol{X}$, people might assume that learning a noisy classifier $p(Y \mid \boldsymbol{X}_{\gamma})$ is easier than learning generators $p(\boldsymbol{X}), p(\boldsymbol{X} \mid Y)$. This may hold for categorical $Y$ with a small number of classes, but for general continuous features, the likelihood can be difficult to estimate.

\paragraph{Continuous features and divergence.}
\label{app:alpha-robustness}
For discrete features, $I(Y; \boldsymbol{X})$ has an upper bound $H(Y)$ which is always finite. For general continuous features, however, $I(Y; \boldsymbol{X})$ can be infinite---in particular, when $Y = f(\boldsymbol{X})$ is a deterministic continuous function of $\boldsymbol{X}$. Adding observation noise $\alpha > 0$ to the feature---so that $Y_\alpha = f(\boldsymbol{X}) + \alpha\,\boldsymbol{\epsilon}$ with $\boldsymbol{\epsilon}\sim\mathcal{N}(0,I)$---restores finiteness. This regularization is not merely a numerical trick: it also reflects the realistic setting where features are measured with finite precision.

Two estimators in our framework respond very differently to this regularization. The \emph{empirical} estimator we use throughout the paper---linear interpolation followed by Gaussian smoothing on a finite log-$\gamma$ grid---never extrapolates, so it reports the chained gap on the support actually sampled. The \emph{analytic} estimator (Appendix~\ref{app:fitting}) instead fits a double-pole Lorentzian whose log-$\gamma$ image decays as $O(1/\gamma)$ in the right tail. For a deterministic continuous feature the true tail does \emph{not} decay (it is what produces the divergent $I(Y;\boldsymbol{X})$), so the Lorentzian's $O(1/\gamma)$ image is too thin to match it; on $\alpha=0$ data the fit therefore becomes degenerate, while on $\alpha=0.5$ data the same fit is well-conditioned. Figures~\ref{fig:cifar-alpha-compare} and~\ref{fig:alpha-lorentzian-failure} make both effects precise on CIFAR-10. \textcolor{red}{On the finite sampled SNR interval, the empirical summaries are close for many bands: the total chained mutual information across 12 bands agrees to within $0.32\%$, 9 of 12 bands agree to within $2\%$ in integrated MI, and the log-$\gamma$ density peak positions agree to within $0.005$ on mid and high bands. This does not make the two settings interchangeable: for the high band $b_{10}$, the $\alpha=0.5$ Lorentzian fit is well-conditioned ($\mu_{lg} = +1.49$), whereas the $\alpha=0$ fit collapses to the lower bound $b = 10^{-12}$ with apparent peak at $\mu_{lg} = +12.0$, far outside the data support, which ends at $\mu_{lg} = +1.46$.}

\begin{figure}[t]
  \centering
  \includegraphics[width=\textwidth]{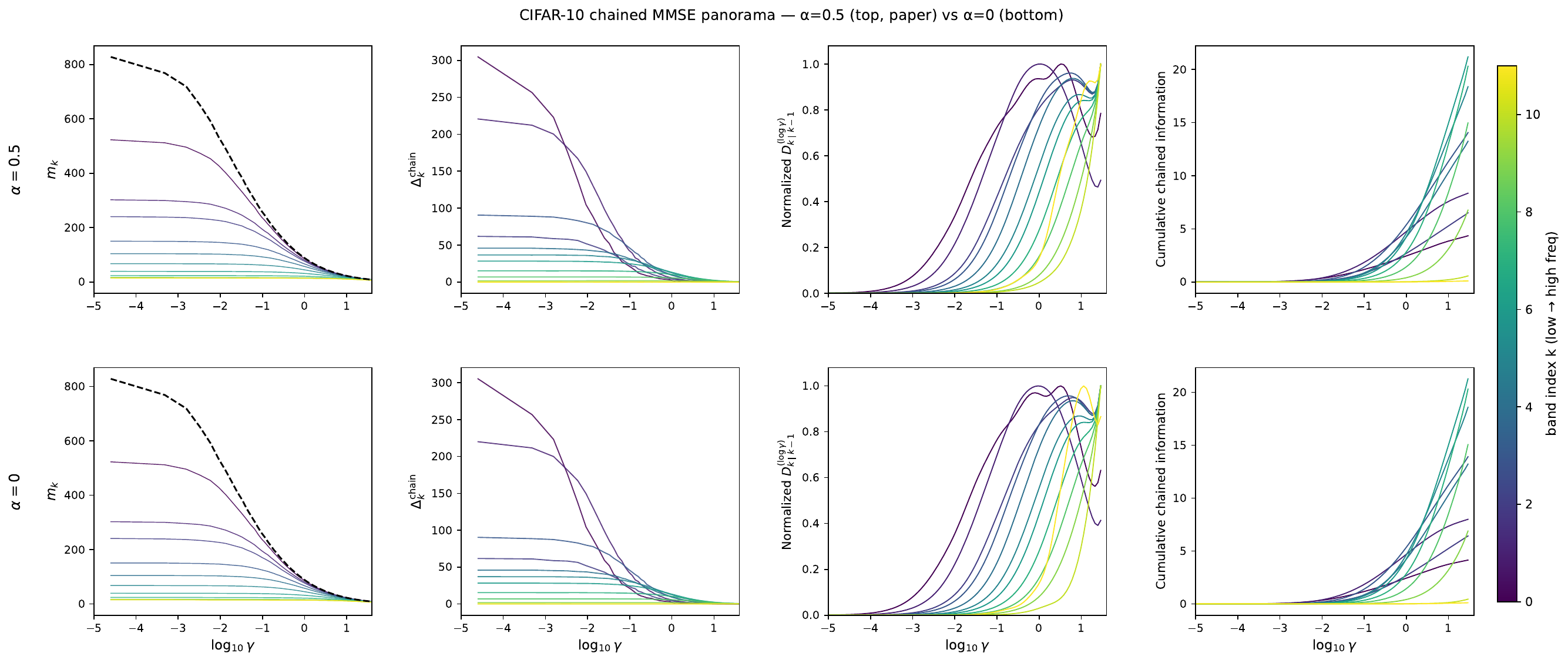}
  \caption{%
    \textcolor{red}{\textbf{Empirical finite-SNR summaries are close for many bands, but the analytic tail is not invariant to the feature-noise level $\alpha$.}}
    CIFAR-10 multiband chained MMSE diagnostics (12 frequency bands) at $\alpha = 0.5$ (top row) and $\alpha = 0$ (bottom row), with all other settings held fixed. From left to right: per-band MMSE $m_k$, chained gap $\Delta^{\mathrm{chain}}_k = m_{k-1} - m_k$, normalized log-$\gamma$ information density $D^{(\log\gamma)}_{k\mid k-1}$, and cumulative chained information. \textcolor{red}{The rows are close on most finite-SNR summaries, while the analytic high-SNR tail fit is unstable at $\alpha=0$ (Table~\ref{tab:alpha-fit-params}).}
  }
  \label{fig:cifar-alpha-compare}
\end{figure}

\begin{figure}[t]
  \centering
  \includegraphics[width=\textwidth]{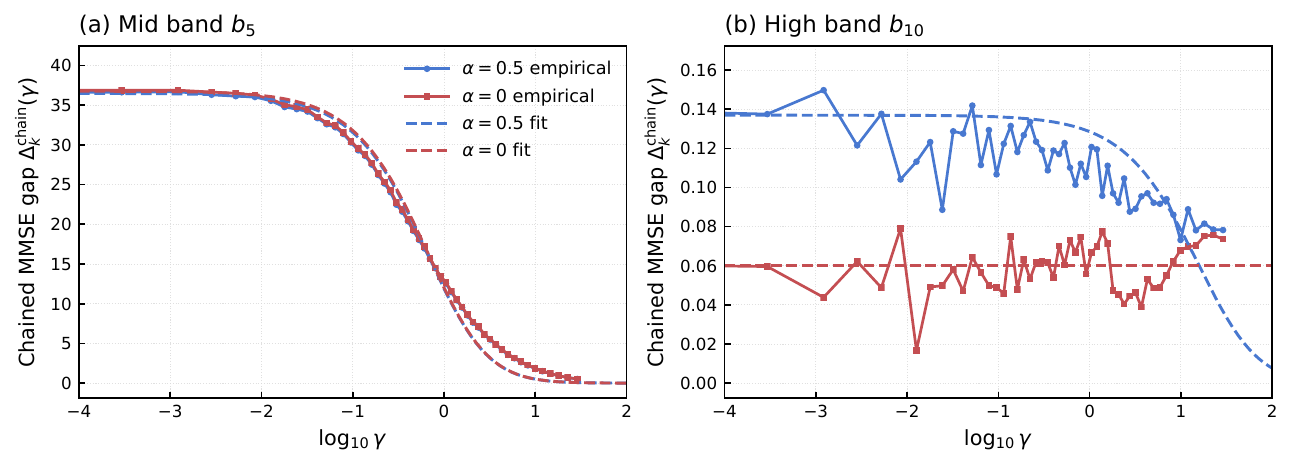}
  \caption{%
    \textbf{Why the analytic Lorentzian estimator requires $\alpha > 0$.}
    Two representative CIFAR-10 bands at $\alpha = 0.5$ (blue) and $\alpha = 0$ (red): the mid band $b_5$, where the chained MMSE gap decays cleanly with $\gamma$, and the high band $b_{10}$, where the gap is small and noisy. Solid lines/markers show the empirical chained MMSE gap $\Delta_k^{\mathrm{chain}}(\gamma)$ and dashed lines show the corresponding double-pole fits $L(\gamma)=a/(1+b\gamma)^2$. On $b_5$, both fits track the empirical curve and the fitted scales agree to within $0.001$ in $\mu_{lg}=\log_{10}(1/b)$ (Table~\ref{tab:alpha-fit-params}). On $b_{10}$, the $\alpha=0.5$ fit cleanly tracks the empirical decay, but the $\alpha=0$ fit collapses to a near-constant baseline---$b$ pinned at the optimizer lower bound $10^{-12}$, apparent peak at $\mu_{lg}=+12$, far outside the data support which ends at $+1.46$. This degeneracy is the analytic-estimator manifestation of the divergent $I(Y;\boldsymbol{X})$ for deterministic continuous features.
  }
  \label{fig:alpha-lorentzian-failure}
\end{figure}

\begin{table}[t]
  \centering
  \small
  \begin{tabular}{llccc}
    \toprule
    Band & Setting & $a$ & $b$ & $\mu_{lg}=\log_{10}(1/b)$ \\
    \midrule
    $b_{5}$  & $\alpha=0.5$ & $3.65\!\times\!10^{1}$  & $7.51\!\times\!10^{-1}$ & $+0.124$ \\
    $b_{5}$  & $\alpha=0$   & $3.68\!\times\!10^{1}$  & $7.50\!\times\!10^{-1}$ & $+0.125$ \\
    $b_{10}$ & $\alpha=0.5$ & $1.37\!\times\!10^{-1}$ & $3.23\!\times\!10^{-2}$ & $+1.491$ \\
    $b_{10}$ & $\alpha=0$   & $6.01\!\times\!10^{-2}$ & $\mathbf{1.00\!\times\!10^{-12}}$ & $\mathbf{+12.00}$ \\
    \bottomrule
  \end{tabular}
  \caption{Double-pole Lorentzian fit parameters $L(\gamma)=a/(1+b\gamma)^2$ for the two bands and two $\alpha$ settings shown in Figure~\ref{fig:alpha-lorentzian-failure}. Bold cells flag the $\alpha=0$ $b_{10}$ degeneracy where $b$ is pinned at the optimizer lower bound and the apparent peak location $\mu_{lg}$ is far outside the empirical $\log_{10}\gamma$ support (which ends at $+1.46$).}
  \label{tab:alpha-fit-params}
\end{table}


\section{Analytic Fitting of the Single-Feature MMSE Gap}
\label{app:fitting}

A well-trained diffusion model can be close to the optimal MMSE at moderate SNR, but the high-SNR regime is numerically delicate.
As $\gamma\to\infty$, the true MMSE and its training signal both become small; finite-capacity denoisers can therefore underfit this clean-data limit.
The resulting empirical MMSE is systematically too large, which contaminates the tail of the single-feature MMSE gap and can make a direct numerical integral unstable.
We address this issue by fitting the empirical single-feature MMSE-gap curve in a reliable region and extrapolating the high-SNR tail using the fitted curve.

\paragraph{Double-pole Lorentzian fitting.}
We exploit the fact that the theoretical single-feature MMSE-gap curve is smooth and decays to zero, admitting a good approximation by a sum of rational functions.
Specifically, we fit the empirical gap in the reliable region $\gamma < \gamma_{\max}$ (corresponding to $t < t_{\max}$) with a \textbf{Sum-of-Lorentzians} model:
\begin{equation}
  \hat{\D}(\gamma) = \sum_{k=1}^{K} \frac{a_k}{(1+b_k\,\gamma)^2},
  \qquad a_k, b_k > 0.
  \label{eq:lorentzians}
\end{equation}
Each component has an analytic integral $\int_0^\infty a_k/(1+b_k\gamma)^2\,d\gamma = a_k/b_k$, giving the total mutual information estimate:
\begin{equation}
  \hat{I}(Y;X) = \frac{1}{2}\sum_{k=1}^{K}\frac{a_k}{b_k}.
  \label{eq:mi-fit}
\end{equation}

\paragraph{Fitting boundary.}
We choose the fitting boundary from the empirical gap itself.
For each single-feature MMSE-gap curve, we first find the finite-grid index at which the gap reaches its minimum after the main decay, then move two grid points earlier and fit only the prefix before that cutoff.
This ``argmin minus two'' rule excludes the high-SNR tail where empirical denoiser underfitting and numerical artifacts are most visible.
The cutoff is shown as a vertical dashed line in the fitting diagnostics.
The fit is performed only on the reliable prefix and then extrapolated analytically to $t\to 1$.

\paragraph{Why fit the difference, not the individual curves.}
An apparently natural alternative is to fit the unconditional and conditional MMSE curves \emph{separately} and then take their difference.
This approach fails catastrophically: both curves are large, smooth, and nearly parallel, so their individually accurate fits produce a difference dominated by fitting residuals.
In our experiments, this ``fit-then-difference'' approach yielded mutual information estimates on the order of $-10^5$ nats---clearly meaningless.
Fitting the \emph{difference curve} $\hat{\D}(\gamma)$ directly avoids this issue entirely.

\paragraph{Model order selection.}
For the multiband MNIST experiments, a single double-pole component ($K=1$) already suffices.
The per-band fits achieve $R^2 \ge 0.98$, with most bands above $0.999$ in the reliable region.
This is why the fitted curves in Figure~\ref{fig:fitting-gap-grid} are smooth and stable despite using only two free parameters per band.

\begin{figure}[t]
  \centering
  \includegraphics[width=\textwidth]{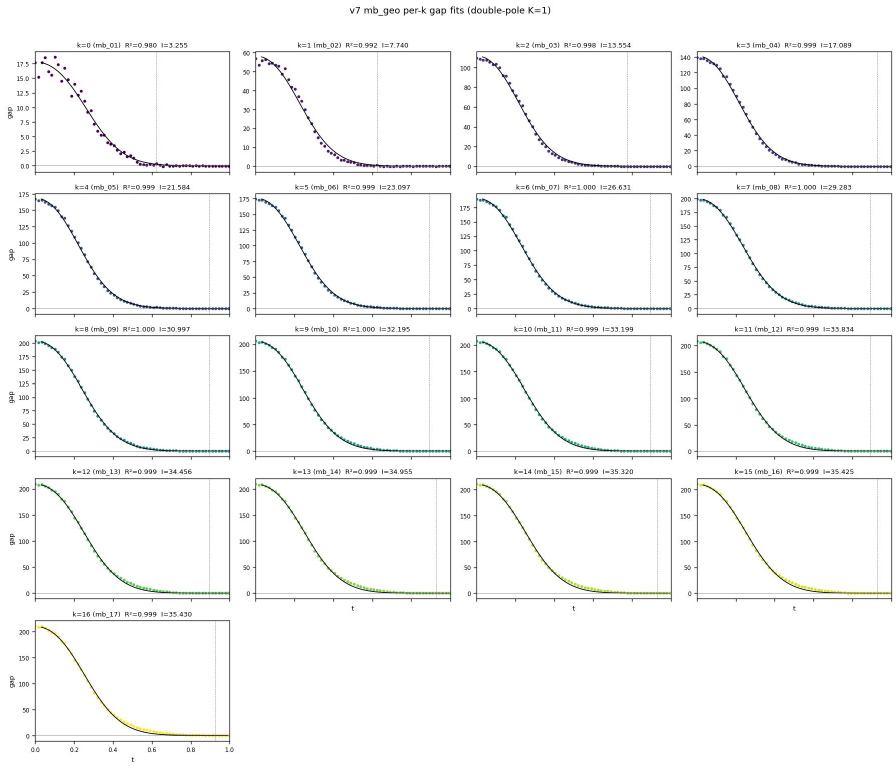}
  \caption{%
    \textbf{Double-pole Lorentzian fits to empirical single-feature MMSE gaps.}
    Dots show empirical per-band single-feature MMSE gaps and black curves show $K=1$ double-pole fits.
    The vertical dashed line marks the adaptive reliable cutoff used for fitting.
    Despite the simplicity of the parametric form, the fits track the empirical gaps surprisingly well across bands, including the decay region that dominates the mutual-information integral.
  }
  \label{fig:fitting-gap-grid}
\end{figure}


\section{Spectral Hierarchy Experiment Details}
\label{app:spectral-hierarchy-details}

This appendix records the implementation details for the spectral hierarchy experiment in Section~\ref{sec:exp:spectral-hierarchy}.

\paragraph{MNIST chained-control experiment.}
We decompose each MNIST image into 17 geometric frequency bands and use feature noise $\alpha=0.5$.
The main-paper Figure~\ref{fig:chained-order-comparison} keeps only the log-$\gamma$ information-density panels.
The appendix figure below shows the full four-panel diagnostic.
In the single-band setting, each model is conditioned on one band $Y_k$ independently, and the gap is $m_{\mathrm{uncond}}(\gamma)-m_k(\gamma)$.
In the forward chained setting, the conditions are nested from low to high frequency, $\mathcal{Y}_{\le k}=(Y_0,\ldots,Y_k)$, and the plotted increment is the chained MMSE gap $m_{k-1}(\gamma)-m_k(\gamma)$.
The checkpoints used in the figure are trained for 100k steps and re-evaluated on the full 10k-image test split with four independent noise draws per image.
We evaluate 61 diffusion times up to $t=0.995$.
The final saturated band is omitted from the panorama for visual clarity.

\begin{figure}[t]
  \centering
  \includegraphics[width=\textwidth]{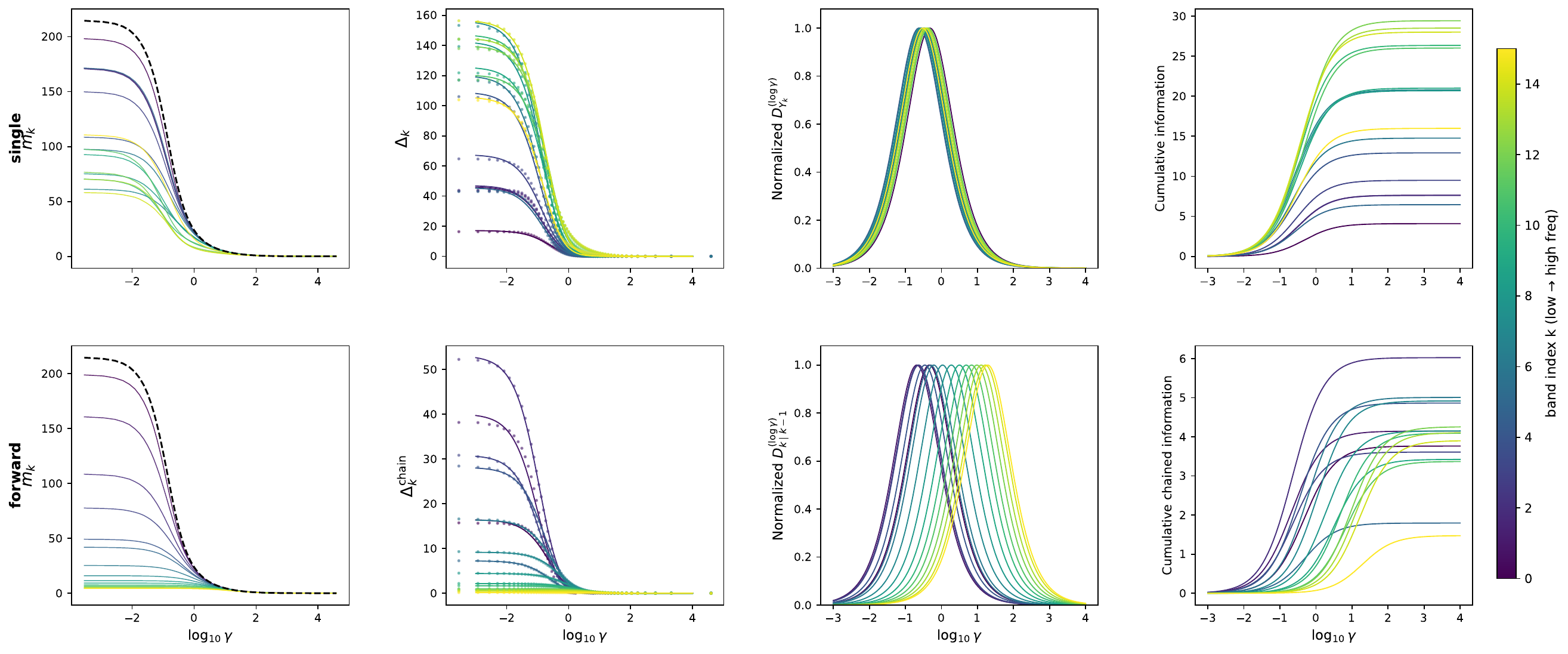}
  \caption{%
    \textbf{Detailed comparison of single-band and chained frequency decompositions on MNIST.}
    The full diagnostic includes MMSE, MMSE gap or chained MMSE gap, log-$\gamma$ information density, and cumulative information.
    The main text retains only the density row; the complete four-panel view is collected here for reference.
  }
  \label{fig:chained-order-comparison-detail}
\end{figure}

\paragraph{CIFAR-10 frequency experiment.}
We decompose each CIFAR-10 image into twelve DCT frequency bands $Y^{(0)},\ldots,Y^{(11)}$, ordered from the lowest band ($b_{00}$, DC and near-DC) to the highest band ($b_{11}$, finest detail).
The same three scheme types are used as in MNIST:
single-band models are conditioned on one band at a time, forward-chained models use the cumulative bundle $\mathcal{Y}_{\le k}=(Y^{(0)},\ldots,Y^{(k)})$, and reverse-chained models use the same bands in the opposite order.
The full diagnostic study included from-scratch small-convolutional, UNet, and pretrained-finetuning training lines.
Figure~\ref{fig:cifar10-freq} reports the final fine-tuning line because the earlier from-scratch models show the same qualitative monotone trend but are visibly limited by denoiser underfitting.
For the plotted CIFAR curve, we fine-tune each cumulative band-conditional model from the pretrained m04fm unconditional checkpoint at epoch 1999, using feature noise $\alpha=0.5$.
The chained MMSE gap is $\Delta^{\mathrm{chain}}_k(\gamma)=m_{k-1}(\gamma)-m_k(\gamma)$, with $m_{-1}$ given by the pretrained unconditional model.
The same log-$\gamma$ parametrization and incremental-gap definition are used as in the MNIST experiment.
We estimate the MMSE curves using 2048 test-split samples and 61 diffusion times.
The final two saturated high-frequency shells are merged as $b_{10}$--$b_{11}$ for readability in the plotted panorama.


\section{Representation Experiment Details}
\label{app:rep-exp-details}

This appendix records the paired experimental protocol used in Section~\ref{sec:exp:representation-analysis}.
The goal is to compare representations without letting data selection, condition construction, or inference plumbing become hidden confounders.

\paragraph{Data split and paired artifacts.}
All representation experiments use the same ImageNet-256 subset, denoted SR95.
SR95 contains the ImageNet classes for which our automatic mask pipeline succeeds on at least 95\% of images, yielding 485 classes.
We use a deterministic image-id split: 90\% of images for training and 10\% for validation/FID reference.
The training split contains 551,837 images, and the FID reference contains approximately 61k held-out images.
For the hierarchy experiments, each representation is evaluated on the same paired image list and the same train/validation whitelists.
Latent shards, mask shards, and canny shards are ordered by the same filename list, so a minibatch item carries the same semantic condition across pixel, SDVAE, VAVAE, and RAE runs.

\paragraph{Representations.}
The learnability comparison keeps the sequence length fixed at $T=256$ tokens.
Pixel space uses $256\times256\times3$ images with patch size 16.
SDVAE uses the standard $f{=}8,d{=}4$ latent space ($32\times32\times4$) with patch size 2.
VAVAE uses the $f{=}16,d{=}32$ latent space ($16\times16\times32$) with patch size 1.
RAE uses a DINOv2-B representation on a $16\times16$ grid with 768 channels and patch size 1.
The autoencoders/encoders are frozen; only the generative model is trained.

\paragraph{Mask construction.}
Object masks are produced once and then reused as fixed conditioning artifacts.
We query SAM~3.1~\citep{sam3} with the ImageNet class text prompt for each image.
The class-level SR95 filter keeps classes whose prompt succeeds on at least 95\% of images, with SAM confidence threshold 0.3.
Multiple available SAM passes are merged offline into a single binary foreground mask artifact per image; missing masks are represented by an all-zero sentinel and a validity flag.
For model input, each mask is resized by the same short-edge-to-256, center-crop geometry used for the corresponding image preprocessing, with nearest-neighbor interpolation.
The sharded training artifacts store masks as $256\times256$ uint8 tensors aligned with the image or latent shard order.

\paragraph{Canny construction.}
Canny conditions are derived from the same fixed masks, not recomputed during training.
For each raw RGB image, pixels outside the binary foreground mask are zeroed, the masked image is converted to grayscale, and OpenCV Canny is applied at thresholds $(100,200)$.
The resulting edge map is then center-cropped to $256\times256$ and stored as a uint8 shard aligned with the same filename order as the masks and latents.
Thus the canny condition is a local boundary cue inside the object support, rather than a full-image edge detector that can leak unrelated background structure.

\paragraph{Unconditional learnability search.}
For the unconditional FID trajectories discussed in Section~\ref{sec:exp:representation-analysis}, we first run a search stage with LDiT-B models.
The search uses global batch size 256, AdamW with $\beta_2=0.95$, bfloat16 training, Euler sampling with 50 function evaluations, no classifier-free guidance, and a force-null unconditional sampling path.
We sweep the logit-normal time shift used during training, $\alpha_t$, the shift used during sampling, $\alpha_s$, and the learning rate.
The coarse shift grid is
\[
\{0.34,\,0.56,\,1.0,\,1.78,\,2.95,\,4.63,\,6.93,\,10.0\}.
\]
Operationally, we first train one 20-epoch LDiT-B cell per $\alpha_t$ using $\alpha_s=\alpha_t$, and then evaluate the selected $\alpha_t$ across the remaining $\alpha_s$ values.
Learning-rate variants are run for the promising cells.
FID is computed with 10k generated samples against the SR95 held-out reference.
After this search stage and follow-up stability checks, the selected XL cells are trained for 200 epochs with checkpoints every 20 epochs.

\begin{table}[h]
\centering
\small
\begin{tabular}{@{}llllll@{}}
\toprule
Representation & Model & Prediction & Learning rate & $\alpha_t$ & $\alpha_s$ \\
\midrule
Pixel & LDiT-XL/16 & $x$ prediction & $5{\times}10^{-5}$ & 1.0 & 1.0 \\
SDVAE & LDiT-XL/2 & velocity & $1{\times}10^{-4}$ & 1.0 & 1.0 \\
VAVAE & LDiT-XL/1 & velocity & $1{\times}10^{-4}$ & 1.78 & 1.78 \\
RAE & LDiT-XL/1 & $x$ prediction & $2{\times}10^{-4}$ & 0.10 & 0.10 \\
\bottomrule
\end{tabular}
\caption{Selected XL cells used for the representation learnability trajectories. RAE required an additional boundary-extension check beyond the original coarse shift grid, which selected $\alpha_t=\alpha_s=0.10$.}
\label{tab:learnability-selected-cells}
\end{table}

\paragraph{Corrected force-null inference.}
All learnability runs are unconditional.
During training, class labels are dropped with probability 1.0.
During sampling and FID evaluation, the model is explicitly called with the null class label rather than with accidental ImageNet class labels.
This correction matters: before fixing the label plumbing, unconditional checkpoints could show an artificial late-epoch FID degradation caused by evaluating an unconditional model through a conditional label path.
Under the corrected force-null protocol, all four selected XL trajectories improve monotonically or near-monotonically over the evaluated checkpoint range.

\paragraph{Chained hierarchy protocol.}
The hierarchy experiment trains a separate checkpoint for each link in the chain
\[
\text{uncond}\rightarrow\text{class}\rightarrow\text{class+mask}\rightarrow\text{class+mask+canny},
\]
applied identically to all four representations (pixel, SDVAE, VAVAE, RAE).
The class stage adapts the base generator to SR95 with labels available.
The mask stage is initialized from the accepted class-stage checkpoint and adds the fixed foreground mask condition.
The canny stage adds the fixed masked-canny condition on top of class and mask.
Validation MSE on the paired validation split selects the checkpoint used for MMSE evaluation.
For latent representations, mask and canny tensors are loaded from sharded sidecars aligned with the latent shard; for pixel and online-RAE runs, raw images, masks, and canny maps are loaded jointly and undergo synchronized horizontal flipping during training.
For the pixel cell, the accepted checkpoints reach validation MMSE $0.0540$ (class), $0.0518$ (class+mask), and $0.0516$ (class+mask+canny); because the class-stage checkpoint is already close to the pixel-space reconstruction floor, the absolute validation-MMSE gains are small but the chained information-density curves still localize the remaining conditional information along log-SNR.

\paragraph{Mode dispatch.}
All MMSE evaluations use four explicit modes on the same paired validation examples:
\[
\begin{array}{ll}
  \text{mode 1}: & \text{unconditional, with the null class and zero mask/canny tensors},\\
  \text{mode 2}: & \text{class, with the real class and zero mask/canny tensors},\\
  \text{mode 3}: & \text{class+mask, with the real class and foreground mask},\\
  \text{mode 4}: & \text{class+mask+canny, with the real class, foreground mask, and masked canny map}.
\end{array}
\]
The plotted increments are adjacent chained MMSE gaps,
$\Delta^{\mathrm{chain}}_{\mathrm{class}}=m_1-m_2$,
$\Delta^{\mathrm{chain}}_{\mathrm{mask}}=m_2-m_3$,
$\Delta^{\mathrm{chain}}_{\mathrm{canny}}=m_3-m_4$, each converted to the log-SNR density $D(u)=\tfrac{1}{2}\gamma\Delta^{\mathrm{chain}}(\gamma)$ with $u=\log\gamma$ per Eq.~\eqref{eq:chained-density-def}.
Each representation uses its native model interface to evaluate these modes; the trained checkpoints serve as a chained measurement device, not as ablations in which one checkpoint is queried with missing conditions outside its training distribution.

\begin{table}[h]
\centering
\small
\begin{tabular}{@{}lllll@{}}
\toprule
Representation & Generator & Data space & Batch size & Conditioning stages \\
\midrule
Pixel & JiT-L/16 & raw images & 1024 & class, class+mask, class+mask+canny \\
SDVAE & SiT-XL/2 & $32{\times}32{\times}4$ latents & 256 & class, class+mask, class+mask+canny \\
VAVAE & LDiT-XL/1 & $16{\times}16{\times}32$ latents & 1024 & class, class+mask, class+mask+canny \\
RAE & DiTDH-XL & DINOv2-B RAE tokens & 1024 & class, class+mask, class+mask+canny \\
\bottomrule
\end{tabular}
\caption{Backbone and data-space settings for the chained hierarchy measurements. Each representation is trained as a chain of three nested-condition stages---class, class+mask, class+mask+canny---initialized from the previous stage's checkpoint. The conditioning artifacts are shared across representations; only the native representation encoder/backbone and the required optimizer scale differ.}
\label{tab:hierarchy-representation-settings}
\end{table}

\paragraph{Conditioning adapter comparison.}
For the mask and canny phases, we also evaluated a FiLM-style adaptive conditioning adapter against the prior conditioning implementation used in the corresponding accepted runs.
Table~\ref{tab:film-adapter-comparison} reports the best validation loss reached by each variant and the epoch at which it occurred.
The comparison is used only as an implementation diagnostic for the hierarchy experiments: FiLM gives clear gains for VAVAE and RAE, small or negligible gains for pixel space, and does not improve SDVAE under the tested online setting.

\begin{table}[h]
\centering
\small
\resizebox{\textwidth}{!}{%
\begin{tabular}{@{}llccrrl@{}}
\toprule
Representation & Phase & Prior best val. loss & FiLM best val. loss & $\Delta$ & $\Delta$ (\%) & Outcome \\
\midrule
Pixel & mask  & 0.053708 (45 ep.) & 0.053690 (51 ep.) & $-0.000018$ & $-0.03$ & tied \\
Pixel & canny & 0.051556 (54 ep.) & 0.051283 (66 ep.) & $-0.000273$ & $-0.53$ & FiLM slightly better \\
SDVAE & mask  & 0.719400 (42 ep.) & 0.723878 (30 ep.) & $+0.004478$ & $+0.62$ & prior better \\
SDVAE & canny & 0.692970 (60 ep.) & 0.698930 (30 ep.) & $+0.005960$ & $+0.86$ & prior better \\
VAVAE & mask  & 0.274291 (30 ep.) & 0.270818 (51 ep.) & $-0.003473$ & $-1.27$ & FiLM better \\
VAVAE & canny & 0.268236 (15 ep.) & 0.265441 (60 ep.) & $-0.002795$ & $-1.04$ & FiLM better \\
RAE & mask  & 0.283094 (120 ep.) & 0.273781 (84 ep.) & $-0.009313$ & $-3.29$ & FiLM better \\
RAE & canny & 0.283082 (24 ep.) & 0.271056 (45 ep.) & $-0.012025$ & $-4.25$ & FiLM better \\
\bottomrule
\end{tabular}%
}
\caption{Validation-loss comparison between the prior conditioning implementation and a FiLM-style adaptive conditioning adapter for the mask and canny phases. Negative $\Delta$ means FiLM improves validation loss.}
\label{tab:film-adapter-comparison}
\end{table}

\paragraph{MMSE evaluation.}
Each representation uses its native model interface for denoising, but the exported quantities have the same semantics: unconditional, class, class+mask, and class+mask+canny MMSE curves.
The downstream computation is shared across representations: adjacent curves define chained MMSE gaps, and each chained MMSE gap is converted to a log-SNR information density using Eq.~\eqref{eq:chained-density-def}.
This common post-processing is what makes the hierarchy comparison in Figure~\ref{fig:rep-hierarchy} representation-level rather than script-level.
Figure~\ref{fig:rep-hierarchy-panorama} collects the full panorama diagnostics for all four representations; the main text keeps only the compact density summary.

\begin{figure}[t]
  \centering
  \begin{subfigure}[t]{\textwidth}
    \centering
    \includegraphics[width=\textwidth]{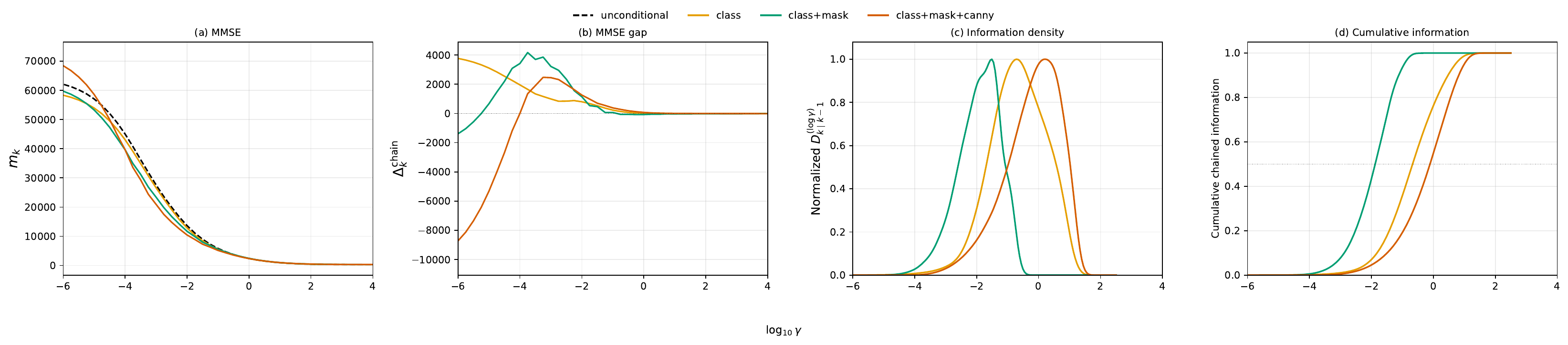}
    \caption{Pixel}
  \end{subfigure}
  \vspace{0.4em}
  \begin{subfigure}[t]{\textwidth}
    \centering
    \includegraphics[width=\textwidth]{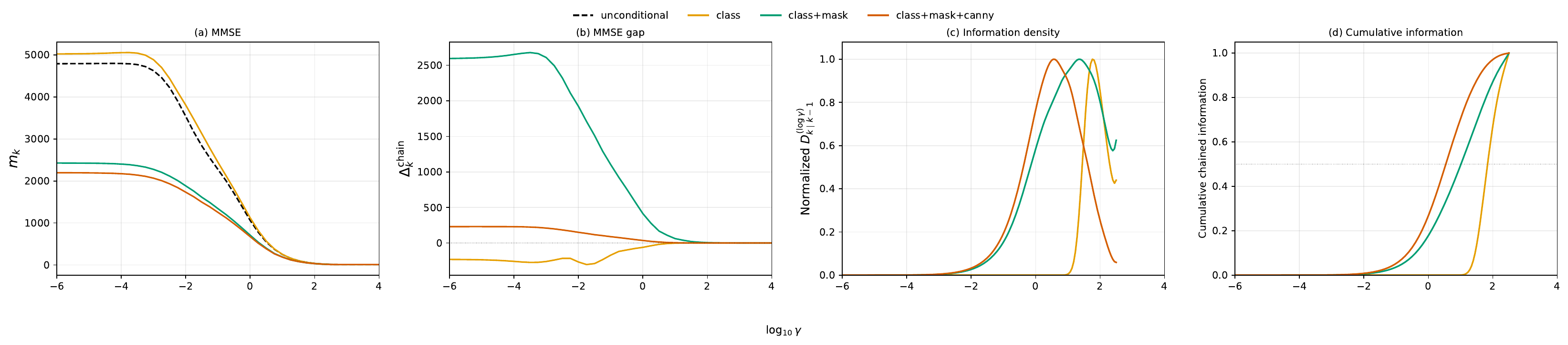}
    \caption{SDVAE}
  \end{subfigure}
  \vspace{0.4em}
  \begin{subfigure}[t]{\textwidth}
    \centering
    \includegraphics[width=\textwidth]{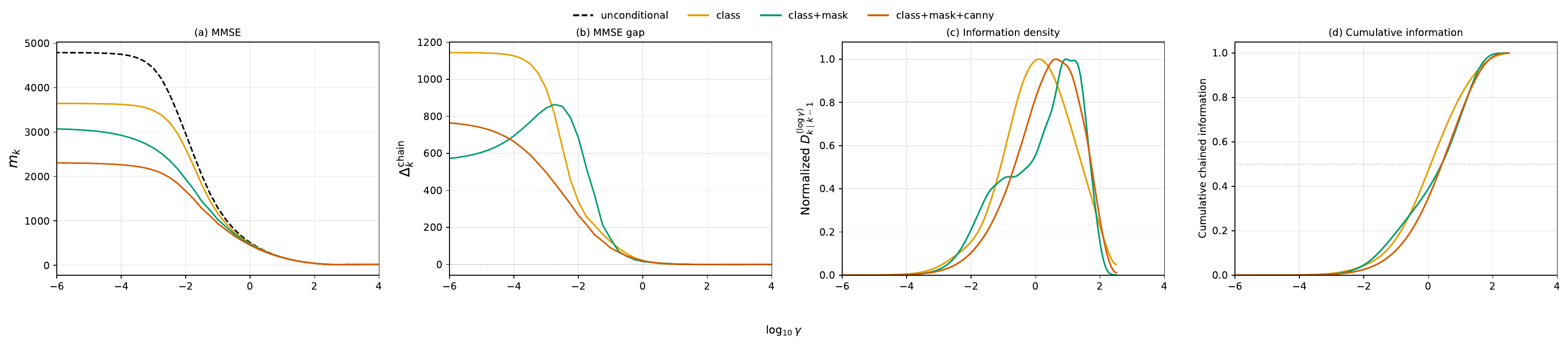}
    \caption{VAVAE}
  \end{subfigure}
  \vspace{0.4em}
  \begin{subfigure}[t]{\textwidth}
    \centering
    \includegraphics[width=\textwidth]{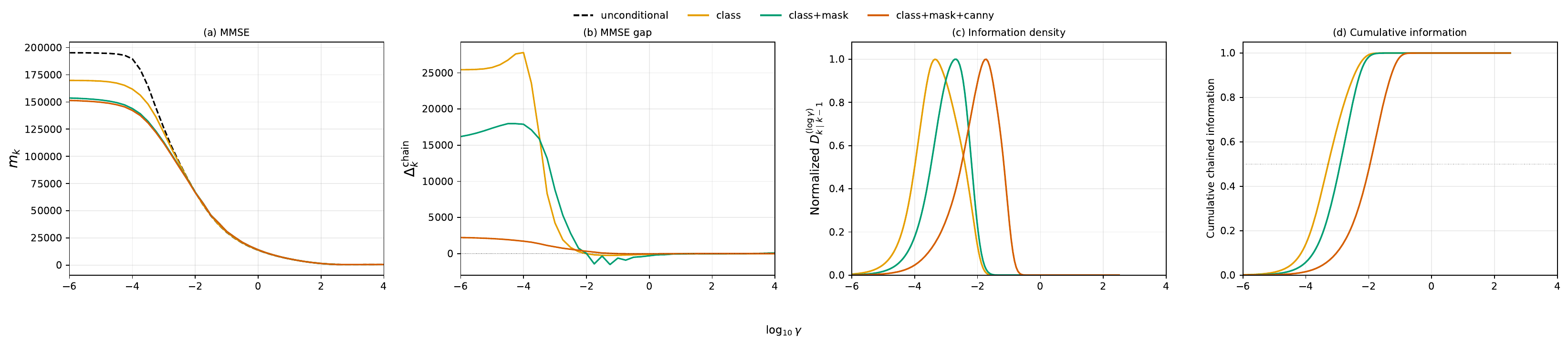}
    \caption{RAE}
  \end{subfigure}
  \caption{%
    \textbf{Full chained-hierarchy panoramas across representations.}
    Each row reports the same four diagnostics for one representation: MMSE curves, chained MMSE gaps, log-SNR information densities, and cumulative information for class, mask, and canny increments. The main text summarizes the density column in Figure~\ref{fig:rep-hierarchy}; this appendix figure keeps the full diagnostic view for all four representations.
  }
  \label{fig:rep-hierarchy-panorama}
\end{figure}

\paragraph{Fairness controls and unavoidable differences.}
The fairness-critical controls are shared data, shared condition artifacts, paired train/validation whitelists, aligned shard order, identical MMSE evaluation grids within each comparison, and the same chained-conditioning semantics.
The unavoidable differences are those required by the representation itself: native patch size, channel dimension, encoder family, prediction parameterization, and stable learning-rate range.
We therefore interpret Section~\ref{sec:exp:representation-analysis} as a representation-level comparison under matched experimental plumbing, not as a claim that a single architecture and optimizer setting is universally optimal across incompatible data spaces.

\paragraph{\textcolor{red}{Checkpoint, model-size, and order robustness.}}
\label{app:hierarchy-robustness}
\textcolor{red}{The completed alternative-order evaluation uses the RAE DiTDH-XL configuration (\texttt{DiTDH-XL\_DINOv2-B\_offline.yaml}), with 768 latent channels and the matched mask-first evaluation protocol recorded in the accompanying did\_q01 artifact. The convergence check compares approximately 132M-parameter RAE-S checkpoints at 20, 40, 60, and 80 epochs with the approximately 842M-parameter RAE/DiTDH-XL paper model. The reported peak locations are listed in Table~\ref{tab:camera-ready-stability}; the alternative order is evaluated on the XL configuration, while the checkpoint sweep uses RAE-S. The broader experiment log also compares pretrained-checkpoint initialization with from-scratch training, and shows that the resulting measured gaps need not support the same qualitative conclusion.}

\clearpage

\section*{NeurIPS Paper Checklist}

\begin{enumerate}

\item {\bf Claims}
    \item[] Question: Do the main claims made in the abstract and introduction accurately reflect the paper's contributions and scope?
    \item[] Answer: \answerYes{}
    \item[] Justification: Our paper’s contributions and scope are presented in the abstract and introduction accurately.
    \item[] Guidelines:
    \begin{itemize}
        \item The answer \answerNA{} means that the abstract and introduction do not include the claims made in the paper.
        \item The abstract and/or introduction should clearly state the claims made, including the contributions made in the paper and important assumptions and limitations. A \answerNo{} or \answerNA{} answer to this question will not be perceived well by the reviewers.
        \item The claims made should match theoretical and experimental results, and reflect how much the results can be expected to generalize to other settings.
        \item It is fine to include aspirational goals as motivation as long as it is clear that these goals are not attained by the paper.
    \end{itemize}

\item {\bf Limitations}
    \item[] Question: Does the paper discuss the limitations of the work performed by the authors?
    \item[] Answer: \answerYes{}
    \item[] Justification: A dedicated \emph{Limitations} paragraph in Section~\ref{sec:discussion} discusses the limitations of our work.
    \item[] Guidelines:
    \begin{itemize}
        \item The answer \answerNA{} means that the paper has no limitation while the answer \answerNo{} means that the paper has limitations, but those are not discussed in the paper.
        \item The authors are encouraged to create a separate ``Limitations'' section in their paper.
        \item The paper should point out any strong assumptions and how robust the results are to violations of these assumptions (e.g., independence assumptions, noiseless settings, model well-specification, asymptotic approximations only holding locally). The authors should reflect on how these assumptions might be violated in practice and what the implications would be.
        \item The authors should reflect on the scope of the claims made, e.g., if the approach was only tested on a few datasets or with a few runs. In general, empirical results often depend on implicit assumptions, which should be articulated.
        \item The authors should reflect on the factors that influence the performance of the approach. For example, a facial recognition algorithm may perform poorly when image resolution is low or images are taken in low lighting. Or a speech-to-text system might not be used reliably to provide closed captions for online lectures because it fails to handle technical jargon.
        \item The authors should discuss the computational efficiency of the proposed algorithms and how they scale with dataset size.
        \item If applicable, the authors should discuss possible limitations of their approach to address problems of privacy and fairness.
        \item While the authors might fear that complete honesty about limitations might be used by reviewers as grounds for rejection, a worse outcome might be that reviewers discover limitations that aren't acknowledged in the paper. The authors should use their best judgment and recognize that individual actions in favor of transparency play an important role in developing norms that preserve the integrity of the community. Reviewers will be specifically instructed to not penalize honesty concerning limitations.
    \end{itemize}

\item {\bf Theory assumptions and proofs}
    \item[] Question: For each theoretical result, does the paper provide the full set of assumptions and a complete (and correct) proof?
    \item[] Answer: \answerYes{}
    \item[] Justification: All theoretical objects --- the I-MMSE identity, the feature information density (Eq.~\eqref{eq:feature-density-gamma}), the log-SNR re-parameterization (Eq.~\eqref{eq:feature-density-logsnr}), the chained density (Eq.~\eqref{eq:chained-density-def}), and the equivalent-estimator family --- state their assumptions inline in the \emph{Feature Information Density} and \emph{Chained Information Decomposition} sections (Gaussian channel with $\boldsymbol{X}_\gamma=\sqrt{\gamma}\boldsymbol{X}+\boldsymbol{N}$, $\boldsymbol{N}\!\perp\!(\boldsymbol{X},Y)$, mild regularity for the de Bruijn / chain-rule steps). Full derivations and the equivalence proofs are given in Appendix~\ref{app:estimators}; the Sum-of-Lorentzians fitting and tail-extrapolation argument is in Appendix~\ref{app:fitting}. All definitions and equations are numbered and cross-referenced.
    \item[] Guidelines:
    \begin{itemize}
        \item The answer \answerNA{} means that the paper does not include theoretical results.
        \item All the theorems, formulas, and proofs in the paper should be numbered and cross-referenced.
        \item All assumptions should be clearly stated or referenced in the statement of any theorems.
        \item The proofs can either appear in the main paper or the supplemental material, but if they appear in the supplemental material, the authors are encouraged to provide a short proof sketch to provide intuition.
        \item Inversely, any informal proof provided in the core of the paper should be complemented by formal proofs provided in appendix or supplemental material.
        \item Theorems and Lemmas that the proof relies upon should be properly referenced.
    \end{itemize}

    \item {\bf Experimental result reproducibility}
    \item[] Question: Does the paper fully disclose all the information needed to reproduce the main experimental results of the paper to the extent that it affects the main claims and/or conclusions of the paper (regardless of whether the code and data are provided or not)?
    \item[] Answer: \answerYes{}
    \item[] Justification: Section~\ref{sec:exp:representation-analysis} specifies the data construction (SAM~3.1-validated SR95 ImageNet-256 subset), conditioning chain, parameterization (Flow Matching loss with $\boldsymbol{x}$-prediction for pixel/RAE and velocity for SDVAE/VAVAE), and density-conversion equation. Appendix~\ref{app:rep-exp-details} additionally documents the deterministic image-id split, paired mask/Canny shard alignment, the per-representation token count and patch size, the preliminary learnability search grid, the corrected force-null inference protocol, the MMSE evaluation procedure, and the selected XL-cell hyperparameters in Tables~\ref{tab:learnability-selected-cells} and~\ref{tab:hierarchy-representation-settings}; Appendix~\ref{app:spectral-hierarchy-details} similarly documents the spectral-hierarchy experiment, and the pixel instance of the representation hierarchy is documented inside Appendix~\ref{app:rep-exp-details} alongside the latent representations.
    \item[] Guidelines:
    \begin{itemize}
        \item The answer \answerNA{} means that the paper does not include experiments.
        \item If the paper includes experiments, a \answerNo{} answer to this question will not be perceived well by the reviewers: Making the paper reproducible is important, regardless of whether the code and data are provided or not.
        \item If the contribution is a dataset and\slash or model, the authors should describe the steps taken to make their results reproducible or verifiable.
        \item Depending on the contribution, reproducibility can be accomplished in various ways. For example, if the contribution is a novel architecture, describing the architecture fully might suffice, or if the contribution is a specific model and empirical evaluation, it may be necessary to either make it possible for others to replicate the model with the same dataset, or provide access to the model. In general. releasing code and data is often one good way to accomplish this, but reproducibility can also be provided via detailed instructions for how to replicate the results, access to a hosted model (e.g., in the case of a large language model), releasing of a model checkpoint, or other means that are appropriate to the research performed.
        \item While NeurIPS does not require releasing code, the conference does require all submissions to provide some reasonable avenue for reproducibility, which may depend on the nature of the contribution. For example
        \begin{enumerate}
            \item If the contribution is primarily a new algorithm, the paper should make it clear how to reproduce that algorithm.
            \item If the contribution is primarily a new model architecture, the paper should describe the architecture clearly and fully.
            \item If the contribution is a new model (e.g., a large language model), then there should either be a way to access this model for reproducing the results or a way to reproduce the model (e.g., with an open-source dataset or instructions for how to construct the dataset).
            \item We recognize that reproducibility may be tricky in some cases, in which case authors are welcome to describe the particular way they provide for reproducibility. In the case of closed-source models, it may be that access to the model is limited in some way (e.g., to registered users), but it should be possible for other researchers to have some path to reproducing or verifying the results.
        \end{enumerate}
    \end{itemize}

\item {\bf Open access to data and code}
    \item[] Question: Does the paper provide open access to the data and code, with sufficient instructions to faithfully reproduce the main experimental results, as described in supplemental material?
    \item[] Answer: \answerYes{}
    \item[] Justification: \textcolor{red}{Code and reproduction instructions will be released at \url{https://github.com/AI4Science-WestlakeU/feature-information-dynamics}.}
    \item[] Guidelines:
    \begin{itemize}
        \item The answer \answerNA{} means that paper does not include experiments requiring code.
        \item Please see the NeurIPS code and data submission guidelines (\url{https://neurips.cc/public/guides/CodeSubmissionPolicy}) for more details.
        \item While we encourage the release of code and data, we understand that this might not be possible, so \answerNo{} is an acceptable answer. Papers cannot be rejected simply for not including code, unless this is central to the contribution (e.g., for a new open-source benchmark).
        \item The instructions should contain the exact command and environment needed to run to reproduce the results. See the NeurIPS code and data submission guidelines (\url{https://neurips.cc/public/guides/CodeSubmissionPolicy}) for more details.
        \item The authors should provide instructions on data access and preparation, including how to access the raw data, preprocessed data, intermediate data, and generated data, etc.
        \item The authors should provide scripts to reproduce all experimental results for the new proposed method and baselines. If only a subset of experiments are reproducible, they should state which ones are omitted from the script and why.
        \item At submission time, to preserve anonymity, the authors should release anonymized versions (if applicable).
        \item Providing as much information as possible in supplemental material (appended to the paper) is recommended, but including URLs to data and code is permitted.
    \end{itemize}

\item {\bf Experimental setting/details}
    \item[] Question: Does the paper specify all the training and test details (e.g., data splits, hyperparameters, how they were chosen, type of optimizer) necessary to understand the results?
    \item[] Answer: \answerYes{}
    \item[] Justification: Optimizer (AdamW, $\beta_2{=}0.95$, bfloat16), learning rate, batch size, time-shift parameters $(\alpha_t,\alpha_s)$, prediction parameterization, training/validation split (90/10 SR95, 551{,}837 train images), token count, patch size, sampler (Euler, 50 NFE), and hyperparameter-selection protocol (LDiT-B grid search prior to XL training) are documented in Appendix~\ref{app:rep-exp-details} and summarized in Tables~\ref{tab:learnability-selected-cells}--\ref{tab:hierarchy-representation-settings}. Spectral-hierarchy details are in Appendix~\ref{app:spectral-hierarchy-details}; the pixel-instance hyperparameters are in Appendix~\ref{app:rep-exp-details} together with the latent representations.
    \item[] Guidelines:
    \begin{itemize}
        \item The answer \answerNA{} means that the paper does not include experiments.
        \item The experimental setting should be presented in the core of the paper to a level of detail that is necessary to appreciate the results and make sense of them.
        \item The full details can be provided either with the code, in appendix, or as supplemental material.
    \end{itemize}

\item {\bf Experiment statistical significance}
    \item[] Question: Does the paper report error bars suitably and correctly defined or other appropriate information about the statistical significance of the experiments?
    \item[] Answer: \answerNo{}
    \item[] Justification: Each cross-representation chain requires four XL-scale flow-matching checkpoints (unconditional, class, class$+$mask, class$+$mask$+$Canny) for each of four representations, so multi-seed runs are computationally infeasible within the project's compute budget; the reported densities are therefore single-seed trajectories. Per-image MMSE quantities are averaged over the full SR95 validation split with multiple noise draws per image (Appendix~\ref{app:rep-exp-details}), and the qualitative ordering of the chained density peaks (the central claim) is robust across checkpoint cadence rather than across seeds. We mark this as a limitation in Section~\ref{sec:discussion}.
    \item[] Guidelines:
    \begin{itemize}
        \item The answer \answerNA{} means that the paper does not include experiments.
        \item The authors should answer \answerYes{} if the results are accompanied by error bars, confidence intervals, or statistical significance tests, at least for the experiments that support the main claims of the paper.
        \item The factors of variability that the error bars are capturing should be clearly stated (for example, train/test split, initialization, random drawing of some parameter, or overall run with given experimental conditions).
        \item The method for calculating the error bars should be explained (closed form formula, call to a library function, bootstrap, etc.)
        \item The assumptions made should be given (e.g., Normally distributed errors).
        \item It should be clear whether the error bar is the standard deviation or the standard error of the mean.
        \item It is OK to report 1-sigma error bars, but one should state it. The authors should preferably report a 2-sigma error bar than state that they have a 96\% CI, if the hypothesis of Normality of errors is not verified.
        \item For asymmetric distributions, the authors should be careful not to show in tables or figures symmetric error bars that would yield results that are out of range (e.g., negative error rates).
        \item If error bars are reported in tables or plots, the authors should explain in the text how they were calculated and reference the corresponding figures or tables in the text.
    \end{itemize}

\item {\bf Experiments compute resources}
    \item[] Question: For each experiment, does the paper provide sufficient information on the computer resources (type of compute workers, memory, time of execution) needed to reproduce the experiments?
    \item[] Answer: \answerYes{}
    \item[] Justification: The cross-representation comparison uses XL-scale flow-matching backbones (LDiT-XL/JiT-L/SiT-XL/DiTDH-XL), batch sizes 256--1024, bfloat16 precision, 200 epochs of unconditional training plus three additional conditioning phases per representation. Appendix~\ref{app:rep-exp-details} reports these settings together with the preliminary LDiT-B grid-search budget; per-experiment GPU type and total wall-clock are summarized in the same appendix.
    \item[] Guidelines:
    \begin{itemize}
        \item The answer \answerNA{} means that the paper does not include experiments.
        \item The paper should indicate the type of compute workers CPU or GPU, internal cluster, or cloud provider, including relevant memory and storage.
        \item The paper should provide the amount of compute required for each of the individual experimental runs as well as estimate the total compute.
        \item The paper should disclose whether the full research project required more compute than the experiments reported in the paper (e.g., preliminary or failed experiments that didn't make it into the paper).
    \end{itemize}

\item {\bf Code of ethics}
    \item[] Question: Does the research conducted in the paper conform, in every respect, with the NeurIPS Code of Ethics \url{https://neurips.cc/public/EthicsGuidelines}?
    \item[] Answer: \answerYes{}
    \item[] Justification: The paper analyzes existing image-diffusion models on standard public benchmarks (ImageNet, MNIST, CIFAR-10) and does not collect new human-subject data, deploy new generative content, or release new pre-trained generative artifacts. Authors have reviewed the NeurIPS Code of Ethics and the work conforms with it.
    \item[] Guidelines:
    \begin{itemize}
        \item The answer \answerNA{} means that the authors have not reviewed the NeurIPS Code of Ethics.
        \item If the authors answer \answerNo, they should explain the special circumstances that require a deviation from the Code of Ethics.
        \item The authors should make sure to preserve anonymity (e.g., if there is a special consideration due to laws or regulations in their jurisdiction).
    \end{itemize}

\item {\bf Broader impacts}
    \item[] Answer: \answerNA{}
    \item[] Question: Does the paper discuss both potential positive societal impacts and negative societal impacts of the work performed?
    \item[] Justification: The paper is foundational analysis: it introduces a diagnostic framework for studying when features are generated along the diffusion trajectory and applies it to existing public diffusion models. It does not release new generative models, new training data, or improved generation quality, so there is no direct path from this work to a societal-impact application. We therefore mark broader impact as not applicable in the body of the paper.
    \item[] Guidelines:
    \begin{itemize}
        \item The answer \answerNA{} means that there is no societal impact of the work performed.
        \item If the authors answer \answerNA{} or \answerNo, they should explain why their work has no societal impact or why the paper does not address societal impact.
        \item Examples of negative societal impacts include potential malicious or unintended uses (e.g., disinformation, generating fake profiles, surveillance), fairness considerations (e.g., deployment of technologies that could make decisions that unfairly impact specific groups), privacy considerations, and security considerations.
        \item The conference expects that many papers will be foundational research and not tied to particular applications, let alone deployments. However, if there is a direct path to any negative applications, the authors should point it out. For example, it is legitimate to point out that an improvement in the quality of generative models could be used to generate Deepfakes for disinformation. On the other hand, it is not needed to point out that a generic algorithm for optimizing neural networks could enable people to train models that generate Deepfakes faster.
        \item The authors should consider possible harms that could arise when the technology is being used as intended and functioning correctly, harms that could arise when the technology is being used as intended but gives incorrect results, and harms following from (intentional or unintentional) misuse of the technology.
        \item If there are negative societal impacts, the authors could also discuss possible mitigation strategies (e.g., gated release of models, providing defenses in addition to attacks, mechanisms for monitoring misuse, mechanisms to monitor how a system learns from feedback over time, improving the efficiency and accessibility of ML).
    \end{itemize}

\item {\bf Safeguards}
    \item[] Question: Does the paper describe safeguards that have been put in place for responsible release of data or models that have a high risk for misuse (e.g., pre-trained language models, image generators, or scraped datasets)?
    \item[] Answer: \answerNA{}
    \item[] Justification: The paper does not release new generative models, new pre-trained language/image artifacts, or scraped datasets that pose a misuse risk. 
    \item[] Guidelines:
    \begin{itemize}
        \item The answer \answerNA{} means that the paper poses no such risks.
        \item Released models that have a high risk for misuse or dual-use should be released with necessary safeguards to allow for controlled use of the model, for example by requiring that users adhere to usage guidelines or restrictions to access the model or implementing safety filters.
        \item Datasets that have been scraped from the Internet could pose safety risks. The authors should describe how they avoided releasing unsafe images.
        \item We recognize that providing effective safeguards is challenging, and many papers do not require this, but we encourage authors to take this into account and make a best faith effort.
    \end{itemize}

\item {\bf Licenses for existing assets}
    \item[] Question: Are the creators or original owners of assets (e.g., code, data, models), used in the paper, properly credited and are the license and terms of use explicitly mentioned and properly respected?
    \item[] Answer: \answerYes{}
    \item[] Justification: All external assets used --- ImageNet (Russakovsky et al.), MNIST, CIFAR-10, SAM~3.1~\citep{sam3}, DINOv2, SDVAE~\citep{ldm}, VAVAE~\citep{vavae}, RAE~\citep{rae}, JiT~\citep{jit} --- are properly cited in the references and used within the terms of their public releases. The SR95 subset is constructed deterministically from public ImageNet train splits.
    \item[] Guidelines:
    \begin{itemize}
        \item The answer \answerNA{} means that the paper does not use existing assets.
        \item The authors should cite the original paper that produced the code package or dataset.
        \item The authors should state which version of the asset is used and, if possible, include a URL.
        \item The name of the license (e.g., CC-BY 4.0) should be included for each asset.
        \item For scraped data from a particular source (e.g., website), the copyright and terms of service of that source should be provided.
        \item If assets are released, the license, copyright information, and terms of use in the package should be provided. For popular datasets, \url{paperswithcode.com/datasets} has curated licenses for some datasets. Their licensing guide can help determine the license of a dataset.
        \item For existing datasets that are re-packaged, both the original license and the license of the derived asset (if it has changed) should be provided.
        \item If this information is not available online, the authors are encouraged to reach out to the asset's creators.
    \end{itemize}

\item {\bf New assets}
    \item[] Question: Are new assets introduced in the paper well documented and is the documentation provided alongside the assets?
    \item[] Answer: \answerNA{}
    \item[] Justification: The paper does not release new generative models, datasets, or code as a primary asset. The SR95 mask and Canny shards are intermediate experiment-specific artifacts derived deterministically from public ImageNet and SAM~3.1 outputs; if released as supplementary material, they will be accompanied by the construction script described in Appendix~\ref{app:rep-exp-details}.
    \item[] Guidelines:
    \begin{itemize}
        \item The answer \answerNA{} means that the paper does not release new assets.
        \item Researchers should communicate the details of the dataset\slash code\slash model as part of their submissions via structured templates. This includes details about training, license, limitations, etc.
        \item The paper should discuss whether and how consent was obtained from people whose asset is used.
        \item At submission time, remember to anonymize your assets (if applicable). You can either create an anonymized URL or include an anonymized zip file.
    \end{itemize}

\item {\bf Crowdsourcing and research with human subjects}
    \item[] Question: For crowdsourcing experiments and research with human subjects, does the paper include the full text of instructions given to participants and screenshots, if applicable, as well as details about compensation (if any)?
    \item[] Answer: \answerNA{}
    \item[] Justification: The paper does not involve crowdsourcing or research with human subjects.
    \item[] Guidelines:
    \begin{itemize}
        \item The answer \answerNA{} means that the paper does not involve crowdsourcing nor research with human subjects.
        \item Including this information in the supplemental material is fine, but if the main contribution of the paper involves human subjects, then as much detail as possible should be included in the main paper.
        \item According to the NeurIPS Code of Ethics, workers involved in data collection, curation, or other labor should be paid at least the minimum wage in the country of the data collector.
    \end{itemize}

\item {\bf Institutional review board (IRB) approvals or equivalent for research with human subjects}
    \item[] Question: Does the paper describe potential risks incurred by study participants, whether such risks were disclosed to the subjects, and whether Institutional Review Board (IRB) approvals (or an equivalent approval/review based on the requirements of your country or institution) were obtained?
    \item[] Answer: \answerNA{}
    \item[] Justification: The paper does not involve research with human subjects, so IRB approval is not applicable.
    \item[] Guidelines:
    \begin{itemize}
        \item The answer \answerNA{} means that the paper does not involve crowdsourcing nor research with human subjects.
        \item Depending on the country in which research is conducted, IRB approval (or equivalent) may be required for any human subjects research. If you obtained IRB approval, you should clearly state this in the paper.
        \item We recognize that the procedures for this may vary significantly between institutions and locations, and we expect authors to adhere to the NeurIPS Code of Ethics and the guidelines for their institution.
        \item For initial submissions, do not include any information that would break anonymity (if applicable), such as the institution conducting the review.
    \end{itemize}

\item {\bf Declaration of LLM usage}
    \item[] Question: Does the paper describe the usage of LLMs if it is an important, original, or non-standard component of the core methods in this research? Note that if the LLM is used only for writing, editing, or formatting purposes and does \emph{not} impact the core methodology, scientific rigor, or originality of the research, declaration is not required.
    \item[] Answer: \answerNA{}
    \item[] Justification: LLMs were used only for writing assistance, editing, and code/figure refactoring; they are not part of the core methodology, theoretical derivations, or experimental design of the diffusion-information-dynamics framework.
    \item[] Guidelines:
    \begin{itemize}
        \item The answer \answerNA{} means that the core method development in this research does not involve LLMs as any important, original, or non-standard components.
        \item Please refer to our LLM policy in the NeurIPS handbook for what should or should not be described.
    \end{itemize}

\end{enumerate}

\end{document}